\newcommand{\citet}[1]{\citeauthor{#1}~\shortcite{#1}}
\newcommand{\citep}{\cite}
\newtheorem{prop}{Proposition}
\renewcommand{\mod}{\text{mod}\,}
\title{Learning Hierarchy-Aware Knowledge Graph Embeddings for Link Prediction}
\author{Zhanqiu Zhang,\thanks{Equal contribution.} Jianyu Cai,\footnotemark[1] Yongdong Zhang, Jie Wang\thanks{Corresponding author.}\\
University of Science and Technology of China\\ 
\{zqzhang, jycai\}@mail.ustc.edu.cn\\
\{zhyd73, jiewangx\}@ustc.edu.cn
}
\begin{document}

\maketitle

\begin{abstract}
Knowledge graph embedding, which aims to represent entities and relations as low dimensional vectors (or matrices, tensors, etc.), has been shown to be a powerful technique for predicting missing links in knowledge graphs. Existing knowledge graph embedding models mainly focus on modeling relation patterns such as symmetry/antisymmetry, inversion, and composition. However, many existing approaches fail to model \textit{semantic hierarchies}, which are common in real-world applications. To address this challenge, we propose a novel knowledge graph embedding model---namely, \textbf{H}ierarchy-\textbf{A}ware \textbf{K}nowledge Graph \textbf{E}mbedding (HAKE)---which maps entities into the polar coordinate system. HAKE is inspired by the fact that concentric circles in the polar coordinate system can naturally reflect the hierarchy. Specifically, the radial coordinate aims to model entities at different levels of the hierarchy, and entities with smaller radii are expected to be at higher levels; the angular coordinate aims to distinguish entities at the same level of the hierarchy, and these entities are expected to have roughly the same radii but different angles. Experiments demonstrate that HAKE can effectively model the semantic hierarchies in knowledge graphs, and significantly outperforms existing state-of-the-art methods on benchmark datasets for the link prediction task. 
\end{abstract}

\section{Introduction}

Knowledge graphs are usually collections of factual triples---(head entity, relation, tail entity), which represent human knowledge in a structured way. In the past few years, we have witnessed the great achievement of knowledge graphs in many areas, such as natural language processing \citep{ernie}, question answering \citep{KGQA}, and recommendation systems \citep{KGRS}.

Although commonly used knowledge graphs contain billions of triples, they still suffer from the incompleteness problem that a lot of valid triples are missing, as it is impractical to find all valid triples manually. Therefore, knowledge graph completion, also known as link prediction in knowledge graphs, has attracted much attention recently. Link prediction aims to automatically predict missing links between entities based on known links. It is a challenging task as we not only need to predict whether there is a relation between two entities, but also need to determine which relation it is.

Inspired by word embeddings \citep{word2vec} that can well capture semantic meaning of words, researchers turn to distributed representations of knowledge graphs (aka, knowledge graph embeddings) to deal with the link prediction problem. Knowledge graph embeddings regard entities and relations as low dimensional vectors (or matrices, tensors), which can be stored and computed efficiently. Moreover, like in the case of word embeddings, knowledge graph embeddings can preserve the semantics and inherent structures of entities and relations. Therefore, other than the link prediction task, knowledge graph embeddings can also be used in various downstream tasks, such as triple classification \citep{transr}, relation inference \citep{RSN}, and search personalization \citep{capse}.

The success of existing knowledge graph embedding models heavily relies on their ability to model connectivity patterns of the relations, such as symmetry/antisymmetry, inversion, and composition  \citep{rotate}. For example, TransE \citep{transe},
which represent relations as translations, can model the inversion and composition patterns. DistMult \citep{distmult}, which models the three-way interactions between head entities, relations, and tail entities, can model the symmetry pattern. RotatE \citep{rotate}, which represents entities as points in a complex space and relations as rotations, can model relation patterns including symmetry/antisymmetry, inversion, and composition.
However, many existing models fail to model \textit{semantic hierarchies} in knowledge graphs.

Semantic hierarchy is a ubiquitous property in knowledge graphs. For instance, WordNet \citep{wordnet} contains the triple [arbor/cassia/palm, hypernym, tree], where ``tree'' is at a higher level than ``arbor/cassia/palm'' in the hierarchy. Freebase \citep{freebase} contains the triple [England, /location/location/contains, Pontefract/Lancaster], where ``Pontefract/Lancaster'' is at a lower level than ``England'' in the hierarchy. Although there exists some work that takes the hierarchy structures into account \citep{hType,hRel}, they usually require additional data or process to obtain the hierarchy information. Therefore, it is still challenging to find an approach that is capable of modeling the semantic hierarchy automatically and effectively.

In this paper, we propose a novel knowledge graph embedding model---namely, \textbf{H}ierarchy-\textbf{A}ware \textbf{K}nowledge Graph \textbf{E}mbedding (HAKE). To model the semantic hierarchies, HAKE is expected to distinguish entities in two categories: (a) at different levels of the hierarchy; (b) at the same level of the hierarchy. Inspired by the fact that entities that have the hierarchical properties can be viewed as a tree, we can use the depth of a node (entity) to model different levels of the hierarchy. Thus, we use modulus information to model entities in the category (a), as the size of moduli can reflect the depth. Under the above settings, entities in the category (b) will have roughly the same modulus, which is hard to distinguish. Inspired by the fact that the points on the same circle can have different phases, we use phase information to model entities in the category (b). Combining the modulus and phase information, HAKE maps entities into the polar coordinate system, where the radial coordinate corresponds to the modulus information and the angular coordinate corresponds to the phase information.
Experiments show that our proposed HAKE model can not only clearly distinguish the semantic hierarchies of entities, but also significantly and consistently outperform several state-of-the-art methods on the benchmark datasets.

\vspace{3mm}
\noindent
\textbf{Notations} Throughout this paper, we use lower-case letters $h$, $r$, and $t$ to represent head entities, relations, and tail entities, respectively. The triplet $(h,r,t)$ denotes a fact in knowledge graphs. The corresponding boldface lower-case letters $\textbf{h}$, $\textbf{r}$ and $\textbf{t}$ denote the embeddings (vectors) of head entities, relations, and tail entities. The $i$-th entry of a vector $\textbf{h}$ is denoted as $[\textbf{h}]_i$. 
Let $k$ denote the embedding dimension.

Let $\circ:\mathbb{R}^n\times\mathbb{R}^n\rightarrow\mathbb{R}^n$ denote the Hadamard product between two vectors, that is,
\begin{align*}
    [\textbf{a}\circ \textbf{b}]_i=[\textbf{a}]_i\cdot [\textbf{b}]_i,
\end{align*}
and $\|\cdot\|_1$, $\|\cdot\|_2$ denote the $\ell_1$ and $\ell_2$ norm, respectively.

\section{Related Work}
In this section, we will describe the related work and the key differences between them and our work in two aspects---the model category and the way to model hierarchy structures in knowledge graphs.

\subsection{Model Category}
Roughly speaking, we can divide knowledge graph embedding models into three categories---translational distance models, bilinear models, and neural network based models. Table \ref{table:related_works} exhibits several popular models.

\textbf{Translational distance models} describe relations as translations from source entities to target entities. TransE \citep{transe} supposes that entities and relations satisfy $\textbf{h}+\textbf{r}\approx \textbf{t}$, where $\textbf{h}, \textbf{r}, \textbf{t} \in \mathbb{R}^n$, and defines the corresponding score function as $f_r(\textbf{h},\textbf{t})=-\|\textbf{h}+\textbf{r}-\textbf{t}\|_{1/2}$. However, TransE does not perform well on 1-N, N-1 and N-N relations \citep{transh}. 
TransH \citep{transh} overcomes the many-to-many relation problem by allowing entities to have distinct representations given different relations. 
The score function is defined as $f_r(\textbf{h},\textbf{t})=-\|\textbf{h}_{\perp}+\textbf{r}-\textbf{t}_{\perp}\|_2$, where $\textbf{h}_{\perp}$ and $\textbf{t}_{\perp}$ are the projections of entities onto relation-specific hyperplanes. 
ManifoldE \citep{manifolde} deals with many-to-many problems by relaxing the hypothesis $\textbf{h}+\textbf{r}\approx \textbf{t}$ to $\|\textbf{h}+\textbf{r}-\textbf{t}\|_2^2\approx\theta_r^2$ for each valid triple. In this way, the candidate entities can lie on a manifold instead of exact point. The corresponding score function is defined as $f_r(\textbf{h},\textbf{t})=-(\|\textbf{h}+\textbf{r}-\textbf{t}\|_2^2-\theta_r^2)^2$.
More recently, to better model symmetric and antisymmetric relations, RotatE \citep{rotate} defines each relation as a rotation from source entities to target entities in a complex vector space. The score function is defined as $f_r(\textbf{h},\textbf{t})=-\|\textbf{h}\circ \textbf{r}-\textbf{t}\|_1$, where $\textbf{h},\textbf{r},\textbf{t}\in\mathbb{C}^k$ and $|[\textbf{r}]_i|=1$.

\begin{table*}[!ht]
    \caption{Details of several knowledge graph embedding models, where $\circ$ denotes the Hadamard product, $f$ denotes a activation function, $*$ denotes 2D convolution, and $\omega$ denotes a filter in convolutional layers. $\bar{\cdot}$ denotes conjugate for complex vectors in ComplEx model and 2D reshaping for real vectors in ConvE model.}
    \centering
    \resizebox{2.1\columnwidth}{!}{
    \begin{tabular}{ccc}
        \toprule
        \textbf{Model} & \textbf{Score Function $f_r(\textbf{h},\textbf{t})$} & \textbf{Parameters}\\
        \midrule
        TransE \citep{transe}    & $-\|\textbf{h}+\textbf{r}-\textbf{t}\|_{1/2}$   & $\textbf{h}, \textbf{r}, \textbf{t}\in\mathbb{R}^k$\\
        TransR \citep{transr}     & $-\|\textbf{M}_r\textbf{h}+\textbf{r}-\textbf{M}_r\textbf{t}\|_2$ & $\textbf{h}, \textbf{t}\in\mathbb{R}^d$, $r\in\mathbb{R}^k$, $\textbf{M}_r\in\mathbb{R}^{k\times d}$\\
        ManifoldE \citep{manifolde}   & $-(\|\textbf{h}+\textbf{r}-\textbf{t}\|_2^2-\theta_r^2)^2$    & $\textbf{h}, \textbf{r}, \textbf{t}\in\mathbb{R}^k$\\
        RotatE \citep{rotate}     &$-\|\textbf{h}\circ \textbf{r}-\textbf{t}\|_2$  & $\textbf{h}, \textbf{r}, \textbf{t}\in\mathbb{C}^k$, $|r_i|=1$\\
        \midrule
        RESCAL \citep{rescal}     &$\textbf{h}^\top \textbf{M}_r\textbf{t}$    & $\textbf{h}, \textbf{t}\in\mathbb{R}^k$, $\textbf{M}_r\in\mathbb{R}^{k\times k}$\\
        DistMult \citep{distmult} &$\textbf{h}^\top \text{diag}(\textbf{r})\textbf{t}$    &$\textbf{h}, \textbf{r}, \textbf{t}\in\mathbb{R}^k$\\
        ComplEx \citep{complex}   &$\text{Re}( \textbf{h}^\top\text{diag}(\textbf{r})\bar{\textbf{t}})$ & $\textbf{h}, \textbf{r}, \textbf{t}\in\mathbb{C}^k$ \\
        \midrule
        ConvE \citep{conve} &$f(\text{vec}(f([\bar{\textbf{r}},\bar{\textbf{h}}]*\omega))\textbf{W})\textbf{t}$    & $\textbf{h}, \textbf{r}, \textbf{t}\in\mathbb{R}^k$\\
        \midrule
        \multirow{2}*{HAKE}    & \multirow{2}*{$-\|\textbf{h}_m\circ \textbf{r}_m-\textbf{t}_m\|_2-\lambda\|\sin((\textbf{h}_p+\textbf{r}_p-\textbf{t}_p)/2)\|_1$} &$\textbf{h}_m, \textbf{t}_m\in\mathbb{R}^k$,$\textbf{r}_m\in\mathbb{R}_+^k$, \\ &&$\textbf{h}_p, \textbf{r}_p, \textbf{t}_p\in[0,2\pi)^k$, $,\lambda\in\mathbb{R}$\\
        \bottomrule
    \end{tabular}
    }
    \label{table:related_works}
\end{table*}

\textbf{Bilinear models} product-based score functions to match latent semantics of entities and relations embodied in their vector space representations. RESCAL \citep{rescal} represents each relation as a full rank matrix, and defines the score function as $f_r(\textbf{h},\textbf{t})=\textbf{h}^\top \textbf{M}_r \textbf{t}$, which can also be seen as a bilinear function. As full rank matrices are prone to overfitting, recent works turn to make additional assumptions on $\textbf{M}_r$. For example, DistMult \citep{distmult} assumes $\textbf{M}_r$ to be a diagonal matrix, and ANALOGY \citep{analogy} supposes that $\textbf{M}_r$ is normal. However, these simplified models are usually less expressive and not powerful enough for general knowledge graphs. Differently, ComplEx \citep{complex} extends DistMult by introducing complex-valued embeddings to better model asymmetric and inverse relations. HolE \citep{hole} combines the expressive power of RESCAL with the efficiency and simplicity of DistMult by using the circular correlation operation.

\textbf{Neural network based models} have received greater attention in recent years. For example, MLP \citep{mlp} and NTN \citep{ntn} use a fully connected neural network to determine the scores of given triples.  ConvE \citep{conve} and ConvKB \citep{convkb} employ convolutional neural networks to define score functions. Recently, graph convolutional networks are also introduced, as knowledge graphs obviously have graph structures \citep{rgcn}. 

Our proposed model HAKE belongs to the translational distance models. More specifically, HAKE shares similarities with RotatE \citep{rotate}, in which the authors claim that they use both modulus and phase information. However, there exist two major differences between RotatE and HAKE. Detailed differences are as follows.
\begin{enumerate}[(a)]
    \item The aims are different. RotatE aims to model the relation patterns including symmetry/antisymmetry, inversion, and composition. HAKE aims to model the semantic hierarchy, while it can also model all the relation patterns mentioned above.
    \item The ways to use modulus information are different. RotatE models relations as rotations in the complex space, which encourages two linked entities to have the same modulus, no matter what the relation is. The different moduli in RotatE come from the inaccuracy in training. Instead, HAKE explicitly models the modulus information, which significantly outperforms RotatE in distinguishing entities at different levels of the hierarchy.
\end{enumerate} 

\subsection{The Ways to Model Hierarchy Structures}
Another related problem is how to model hierarchy structures in knowledge graphs. Some recent work considers the problem in different ways. \citet{hCat} embed entities and categories jointly into a semantic space and designs models for the concept categorization and dataless hierarchical classification tasks. \citet{hRel} use clustering algorithms to model the hierarchical relation structures. \citet{hType} proposed TKRL, which embeds the type information into knowledge graph embeddings. That is, TKRL requires additional hierarchical type information for entities.  

Different from the previous work, our work
\begin{enumerate}[(a)]
    \item considers the link prediction task, which is a more common task for knowledge graph embeddings;
    \item can automatically learn the semantic hierarchy in knowledge graphs without using clustering algorithms; 
    \item does not require any additional information other than the triples in knowledge graphs. 
\end{enumerate}

\section{The Proposed HAKE}
In this section, we introduce our proposed model HAKE. We first introduce two categories of entities that reflect the semantic hierarchies in knowledge graphs. Afterwards, we introduce our proposed HAKE that can model entities in both of the categories.

\subsection{Two Categories of Entities}
To model the semantic hierarchies of knowledge graphs, a knowledge graph embedding model must be capable of distinguishing entities in the following two categories.
\begin{enumerate}[(a)]
    \item Entities at different levels of the hierarchy. For example, ``mammal'' and ``dog'', ``run'' and ''move''.
    \item Entities at the same level of the hierarchy. For example, ``rose'' and ``peony'', ``truck'' and ''lorry''.
\end{enumerate}

\subsection{Hierarchy-Aware Knowledge Graph Embedding}
To model both of the above categories, we propose a hierarchy-aware knowledge graph embedding model---HAKE. HAKE consists of two parts---the modulus part and the phase part---which aim to model entities in the two different categories, respectively. Figure \ref{fig:mode_illustration} gives an illustration of the proposed model.

To distinguish embeddings in the different parts, we use $\textbf{e}_m$ ($\textbf{e}$ can be $\textbf{h}$ or $\textbf{t}$) and $\textbf{r}_m$ to denote the entity embedding and relation embedding in the modulus part, and use $\textbf{e}_p$ ($\textbf{e}$ can be $\textbf{h}$ or $\textbf{t}$) and $\textbf{r}_p$ to denote the entity embedding and relation embedding in the phase part.

\textbf{The modulus part} aims to model the entities at different levels of the hierarchy. Inspired by the fact that entities that have hierarchical property can be viewed as a tree, we can use the depth of a node (entity) to model different levels of the hierarchy. Therefore, we use modulus information to model entities in the category (a), as moduli can reflect the depth in a tree. Specifically, we regard each entry of $\textbf{h}_m$ and $\textbf{t}_m$, that is, $[\textbf{h}_m]_i$ and $[\textbf{t}_m]_i$, as a modulus, and regard each entry of $\textbf{r}_m$, that is, $[\textbf{r}]_i$, as a scaling transformation between two moduli. We can formulate the modulus part as follows:
\begin{align*}
    \textbf{h}_m\circ \textbf{r}_m=\textbf{t}_m, \text{ where }\,\textbf{h}_m, \textbf{t}_m\in\mathbb{R}^k, \text{ and } \textbf{r}_m\in\mathbb{R}_+^k.
\end{align*}
The corresponding distance function is:
\begin{align*}
    d_{r,m}(\textbf{h}_m, \textbf{t}_m)=\|\textbf{h}_m\circ \textbf{r}_m-\textbf{t}_m\|_2.
\end{align*}

\begin{figure}[ht]
  \centering 
  \includegraphics[width=0.9\columnwidth]{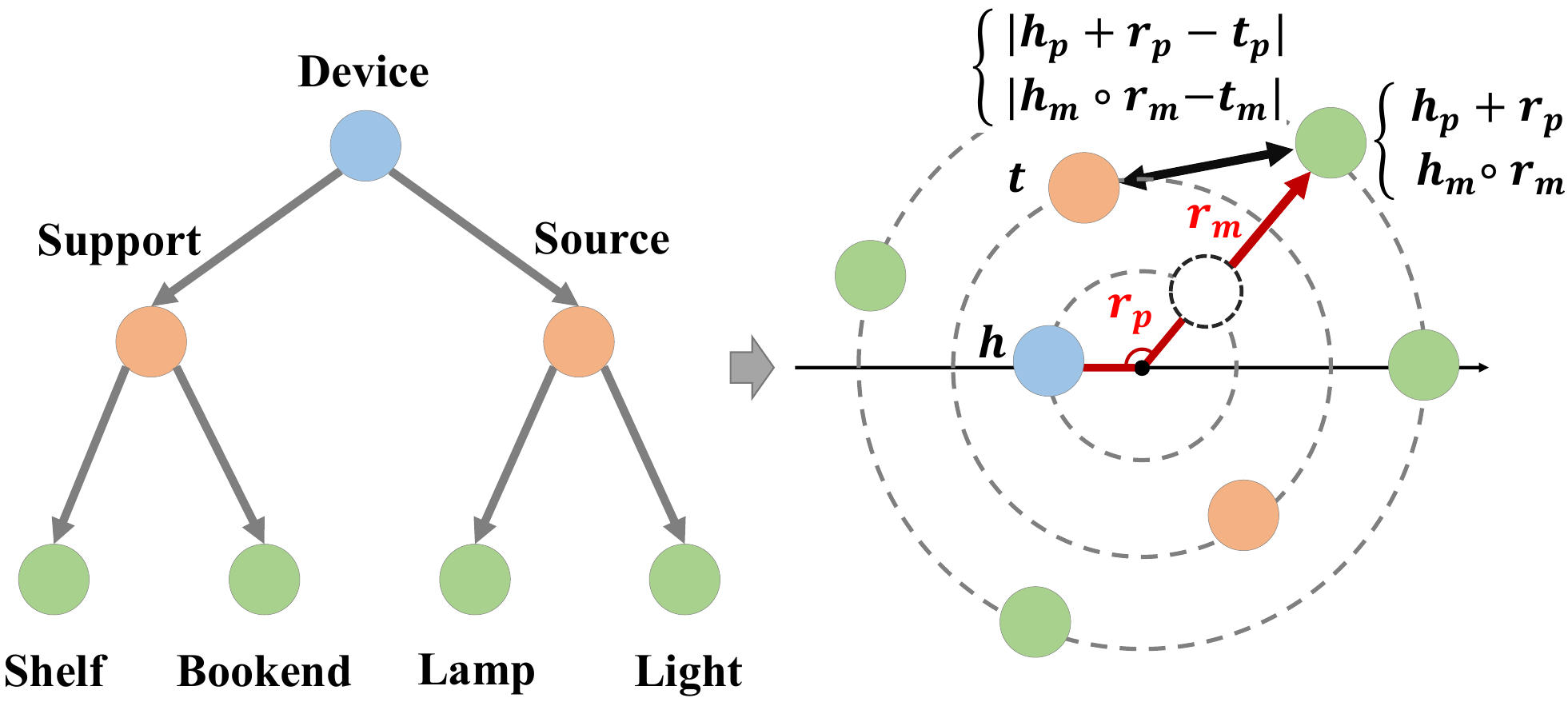}
\caption{Simple illustration of HAKE. In a polar coordinate system, the radial coordinate aims to model entities at different levels of the hierarchy, and the angular coordinate aims to distinguish entities at the same level of the hierarchy.}
\label{fig:mode_illustration}
\end{figure}

Note that we allow the entries of entity embeddings to be negative but restrict the entries of relation embeddings to be positive. This is because that the signs of entity embeddings can help us to predict whether there exists a relation between two entities. For example, if there exists a relation $r$ between $h$ and $t_1$, and no relation between $h$ and $t_2$, then $(h, r, t_1)$ is a positive sample and $(h, r, t_2)$ is a negative sample. Our goal is to minimize $d_r(\textbf{h}_m, \textbf{t}_{1,m})$ and maximize $d_r(\textbf{h}_m, \textbf{t}_{2,m})$, so as to make a clear distinction between positive and negative samples. For the positive sample, $[\textbf{h}]_i$ and $[\textbf{t}_1]_i$ tend to share the same sign, as $[\textbf{r}_m]_i>0$. For the negative sample, the signs of $[\textbf{h}_m]_i$ and $[\textbf{t}_{2,m}]_i$ can be different if we initialize their signs randomly. In this way, $d_r(\textbf{h}_m, \textbf{t}_{2,m})$ is more likely to be larger than $d_r(\textbf{h}_m, \textbf{t}_{1,m})$, which is exactly what we desire. We will validate this argument by experiments in Section $4$ of the supplementary material.

Further, we can expect the entities at higher levels of the hierarchy to have smaller modulus, as these entities are more close to the root of the tree.

If we use only the modulus part to embed knowledge graphs, then the entities in the category (b) will have the same modulus. Moreover, suppose that $r$ is a relation that reflects the same semantic hierarchy, then $[\textbf{r}]_i$ will tend to be one, as $h\circ r\circ r=h$ holds for all $h$. Hence, embeddings of the entities in the category (b) tend to be the same, which makes it hard to distinguish these entities. Therefore, a new module is required to model the entities in the category (b). 

\textbf{The phase part} aims to model the entities at the same level of the semantic hierarchy. Inspired by the fact that points on the same circle (that is, have the same modulus) can have different phases, we use phase information to distinguish entities in the category (b). Specifically, we regard each entry of $\textbf{h}_p$ and $\textbf{t}_p$, that is, $[\textbf{h}_p]_i$ and $[\textbf{t}_p]_i$ as a phase, and regard each entry of $\textbf{r}_p$, that is, $[\textbf{r}_p]_i$, as a phase transformation. We can formulate the phase part as follows:
\begin{align*}
    (\textbf{h}_p+\textbf{r}_p)\text{mod}\, 2\pi=\textbf{t}_p, \text{where } \textbf{h}_p,\textbf{r}_p,\textbf{t}_p\in[0,2\pi)^k.
\end{align*}
The corresponding distance function is:
\begin{align*}
    d_{r,p}(\textbf{h}_p,\textbf{t}_p)=\|\sin((\textbf{h}_p+\textbf{r}_p-\textbf{t}_p)/2)\|_1,
\end{align*}
where $\sin(\cdot)$ is an operation that applies the sine function to each element of the input. Note that we use a sine function to measure the distance between phases instead of using $\|\textbf{h}_p+\textbf{r}_p-\textbf{t}_p\|_1$, as phases have periodic characteristic. This distance function shares the same formulation with that of pRotatE \citep{rotate}.

Combining the modulus part and the phase part, HAKE maps entities into the \textbf{polar coordinate system}, where the radial coordinate and the angular coordinates correspond to the modulus part and the phase part, respectively. That is, HAKE maps an entity $h$ to $[\textbf{h}_m;\textbf{h}_p]$, where $\textbf{h}_m$ and $\textbf{h}_p$ are generated by the modulus part and the phase part, respectively, and $[\,\cdot\,; \,\cdot\,]$ denotes the concatenation of two vectors. Obviously, $([\textbf{h}_m]_i,[\textbf{h}_p]_i)$ is a 2D point in the polar coordinate system. Specifically, we formulate HAKE as follows:
\begin{align*}
    \begin{cases}
    \textbf{h}_m\circ \textbf{r}_m=\textbf{t}_m, \text{ where } {\textbf{h}_m,\textbf{t}_m\in\mathbb{R}^k,\textbf{r}_m}\in\mathbb{R}_+^k,\\
    (\textbf{h}_p+\textbf{r}_p)\text{mod}\, 2\pi=\textbf{t}_p, \text{ where } {\textbf{h}_p,\textbf{t}_p,\textbf{r}_p}\in [0, 2\pi)^k.
    \end{cases}
\end{align*}
The distance function of HAKE is:
\begin{align*}
    d_{r}(\textbf{h},\textbf{t})=d_{r,m}(\textbf{h}_m,\textbf{t}_m)+\lambda d_{r,p}(\textbf{h}_p,\textbf{t}_p),
\end{align*}
where $\lambda\in\mathbb{R}$ is a parameter that learned by the model.
The corresponding score function is
\begin{align*}
    f_r(\textbf{h},\textbf{t})=d_r(\textbf{h},\textbf{t})=-d_{r,m}(\textbf{h},\textbf{t})-\lambda d_{r,p}(\textbf{h},\textbf{t}).
\end{align*}

When two entities have the same moduli, then the modulus part $d_{r,m}(\textbf{h}_m,\textbf{t}_m)=0$. However, the phase part $d_{r,p}(\textbf{h}_p,\textbf{t}_p)$ can be very different. By combining the modulus part and the phase part, HAKE can model the entities in both the category (a) and the category (b). Therefore, HAKE can model semantic hierarchies of knowledge graphs.

When evaluating the models, we find that adding a \textbf{mixture bias} to $d_{r,m}(\textbf{h},\textbf{t})$ can help to improve the performance of HAKE. The modified $d_{r,m}(\textbf{h},\textbf{t})$ is given by:
\begin{align*}\label{eqn:bias}
    d'_{r,m}(\textbf{h},\textbf{t})=\|\textbf{h}_m\circ \textbf{r}_m+(\textbf{h}_m+\textbf{t}_m)\circ \textbf{r}'_m-\textbf{t}_m\|_2,
\end{align*}
where $-\textbf{r}_m<\textbf{r}'_m<1$ is a vector that have the same dimension with $\textbf{r}_m$.
Indeed, the above distance function is equivalent to 
\begin{align*}
    d'_{r,m}(\textbf{h},\textbf{t})=\|\textbf{h}_m\circ ((\textbf{r}_m+\textbf{r}_m')/(1-\textbf{r}_m'))-\textbf{t}_m\|_2,
\end{align*}
where $/$ denotes the element-wise division operation.
If we let $\textbf{r}_m\leftarrow(\textbf{r}_m+\textbf{r}_m')/(1-\textbf{r}_m')$, then the modified distance function is exactly the same as the original one when compare the distances of different entity pairs. For notation convenience, we still use $d_{r,m}(\textbf{h},\textbf{t})=\|\textbf{h}_m\circ \textbf{r}_m-\textbf{t}_m\|_2$ to represent the modulus part. We will conduct ablation studies on the bias in the experiment section.

\subsection{Loss Function}
To train the model, we use the negative sampling loss functions with self-adversarial training \citep{rotate}:
\begin{align*}
    L=&-\log\sigma(\gamma-d_r(\textbf{h},\textbf{t}))\\&-\sum_{i=1}^np(h'_i,r,t'_i)\log\sigma(d_r(\textbf{h}'_i,\textbf{t}'_i)-\gamma),
\end{align*}
where $\gamma$ is a fixed margin, $\sigma$ is the sigmoid function, and $(h'_i,r,t'_i)$ is the $i$th negative triple. Moreover,
\begin{align*}
    p(h'_j,r,t'_j|\{(h_i,r_i,t_i)\})=\frac{\exp \alpha f_r(\textbf{h}'_j, \textbf{t}'_j)}{\sum_i \exp \alpha f_r(\textbf{h}'_i, \textbf{t}'_i)}
\end{align*}
is the probability distribution of sampling negative triples, where $\alpha$ is the temperature of sampling.

\section{Experiments and Analysis}
This section is organized as follows. First, we introduce the experimental settings in detail. Then, we show the effectiveness of our proposed model on three benchmark datasets. Finally, we analyze the embeddings generated by HAKE, and show the results of ablation studies. The code of HAKE is available on GitHub at \url{https://github.com/MIRALab-USTC/KGE-HAKE}.

\subsection{Experimental Settings}
We evaluate our proposed models on three commonly used knowledge graph datasets---WN18RR \citep{wn18rr}, FB15k-237 \citep{conve}, and YAGO3-10 \citep{yago3}. Details of these datasets are summarized in Table \ref{table:datasets}.

WN18RR, FB15k-237, and YAGO3-10 are subsets of WN18 \citep{transe}, FB15k \citep{transe}, and YAGO3 \citep{yago3}, respectively.
As pointed out by \citet{wn18rr} and \citet{conve}, WN18 and FB15k suffer from the test set leakage problem. One can attain the state-of-the-art results even using a simple rule based model. Therefore, we use WN18RR and FB15k-237 as the benchmark datasets. 

\noindent\textbf{Evaluation Protocol} Following \citet{transe}, for each triple $(h,r,t)$ in the test dataset, we replace either the head entity $h$ or the tail entity $t$ with each candidate entity to create a set of candidate triples. We then rank the candidate triples in descending order by their scores. It is worth noting that we use the ``Filtered'' setting as in \citet{transe}, which does not take any existing valid triples into accounts at ranking. We choose Mean Reciprocal Rank (MRR) and Hits at N (H@N) as the evaluation metrics. Higher MRR or H@N indicate better performance. 

\noindent\textbf{Training Protocol} We use Adam \citep{adam} as the optimizer, and use grid search to find the best hyperparameters based on the performance on the validation datasets. To make the model easier to train, we add an additional coefficient to the distance function, i.e., $d_{r}(\textbf{h},\textbf{t})=\lambda_1d_{r,m}(\textbf{h}_m,\textbf{t}_m)+\lambda_2 d_{r,p}(\textbf{h}_p,\textbf{t}_p)$, where $\lambda_1,\lambda_2\in \mathbb{R}$. 

\noindent\textbf{Baseline Model} One may argue that the phase part is unnecessary, as we can distinguish entities in the category (b) by allowing $[\textbf{r}]_i$ to be negative. We propose a model---ModE---that uses only the modulus part but allow $[\textbf{r}]_i<0$. Specifically, the distance function of ModE is
\begin{align*}
    d_r(\textbf{h},\textbf{t})=\|\textbf{h}\circ \textbf{r}-\textbf{t}\|_2, \text{where } \textbf{h},\textbf{r},\textbf{t}\in\mathbb{R}^k.
\end{align*}

\subsection{Main Results}
In this part, we show the performance of our proposed models---HAKE and ModE---against existing state-of-the-art methods, including TransE \citep{transe}, DistMult \citep{distmult}, ComplEx \citep{complex}, ConvE \citep{conve}, and RotatE \citep{rotate}. 

Table \ref{table:main_results} shows the performance of HAKE, ModE, and several previous models. Our baseline model ModE shares similar simplicity with TransE, but significantly outperforms it on all datasets. Surprisingly, ModE even outperforms more complex models such as DistMult, ConvE and Complex on all datasets, and beats the state-of-the-art model---RotatE---on FB15k-237 and YAGO3-10 datasets, which demonstrates the great power of modulus information. Table \ref{table:main_results} also shows that our HAKE significantly outperforms existing state-of-the-art methods on all datasets. 

\begin{table}[ht]
    \centering
    \caption{Statistics of datasets. The symbols \#E and \#R denote the number of entities and relations, respectively. \#TR, \#VA, and \#TE denote the size of train set, validation set, and test set, respectively.}
    \resizebox{.95\columnwidth}{!}{
    \begin{tabular}{l *{5}{c}}
        \toprule
        Dataset & \#E & \#R & \#TR & \#VA & \#TE \\
        \midrule
        WN18RR & 40,493 & 11 & 86,835 & 3,034 & 3,134 \\
        FB15k-237 & 14,541 & 237 & 272,115 & 17,535 & 20,466 \\
        YAGO3-10 & 123,182 & 37 & 1,079,040 & 5,000 & 5,000 \\
        \bottomrule
    \end{tabular}
    }
    \label{table:datasets}
\end{table}

WN18RR dataset consists of two kinds of relations: the symmetric relations such as $\_similar\_to$, which link entities in the category (b); other relations such as $\_hypernym$ and $\_member\_meronym$, which link entities in the category (a). Actually, RotatE can model entities in the category (b) very well \citep{rotate}. However, HAKE gains a 0.021 higher MRR, a 2.4\% higher H@1, and a 2.4\% higher H@3 against RotatE, respectively. The superior performance of HAKE compared with RotatE implies that our proposed model can better model different levels in the hierarchy.

FB15k-237 dataset has more complex relation types and fewer entities, compared with WN18RR and YAGO3-10. Although there are relations that reflect hierarchy in FB15k-237, there are also lots of relations, such as ``/location/location/time\_zones'' and ``/film/film/prequel'', that do not lead to hierarchy. The characteristic of this dataset accounts for why our proposed models doesn't outperform the previous state-of-the-art as much as that of WN18RR and YAGO3-10 datasets. However, the results also show that our models can gain better performance so long as there exists semantic hierarchies in knowledge graphs. As almost all knowledge graphs have such hierarchy structures, our model is widely applicable.  

YAGO3-10 datasets contains entities with high relation-specific indegree \citep{conve}. For example, the link prediction task $(?, hasGender, male)$ has over $1000$ true answers, which makes the task challenging. Fortunately, we can regard ``male'' as an entity at higher level of the hierarchy and the predicted head entities as entities at lower level. In this way, YAGO3-10 is a dataset that clearly has semantic hierarchy property, and we can expect that our proposed models is capable of working well on this dataset. Table \ref{table:main_results} validates our expectation. Both ModE and HAKE significantly outperform the previous state-of-the-art. Notably, HAKE gains a 0.050 higher MRR, 6.0\% higher H@1 and 4.6\% higher H@3 than RotatE, respectively.

\begin{table*}[ht]
    \caption{Evaluation results on WN18RR, FB15k-237 and YAGO3-10 datasets. Results of TransE and RotatE are taken from \citet{convkb} and \citet{rotate}, respectively. Other results are taken from \citet{conve}.}
    \centering
    \begin{tabular}{l  c c c c  c c c c  c c c c }
        \toprule
          &\multicolumn{4}{c}{\textbf{WN18RR}}&  \multicolumn{4}{c}{\textbf{FB15k-237}} & \multicolumn{4}{c}{\textbf{YAGO3-10}}\\
         \cmidrule(lr){2-5}
         \cmidrule(lr){6-9}
         \cmidrule(lr){10-13}
         & MRR & H@1 & H@3 & H@10 & MRR & H@1 & H@3 & H@10 & MRR & H@1 & H@3 & H@10 \\
        \midrule
        TransE & .226 &   -  &   -  & .501 & .294 &   -  &   -  & .465 & - & - & - & -\\
        DistMult & .43 & .39 & .44 & .49 & .241 & .155 & .263 & .419 & .34 & .24 & .38 & .54 \\
        ConvE  & .43  & .40 & .44 & .52 & .325 & .237 & .356 & .501 & .44  & .35  & .49  & .62\\
        ComplEx & .44  & .41  & .46  & .51  & .247 & .158 & .275 & .428 & .36 & .26 & .40 & .55\\
        RotatE & .476 & .428 & .492 & .571 & .338 & .241 & .375 & .533 & .495 & .402 & .550 & .670\\
        \midrule
        ModE  & .472 & .427 & .486 & .564 & .341 & .244 & .380 & .534 & .510 & .421 & .562 & .660\\
        HAKE   & \textbf{.497} & \textbf{.452} & \textbf{.516} & \textbf{.582} & \textbf{.346} & \textbf{.250} & \textbf{.381} & \textbf{.542} & \textbf{.545} & \textbf{.462} & \textbf{.596} & \textbf{.694}\\
        \bottomrule
    \end{tabular}
    \label{table:main_results}
\end{table*}

\begin{figure}[!ht]
    \centering 
    \includegraphics{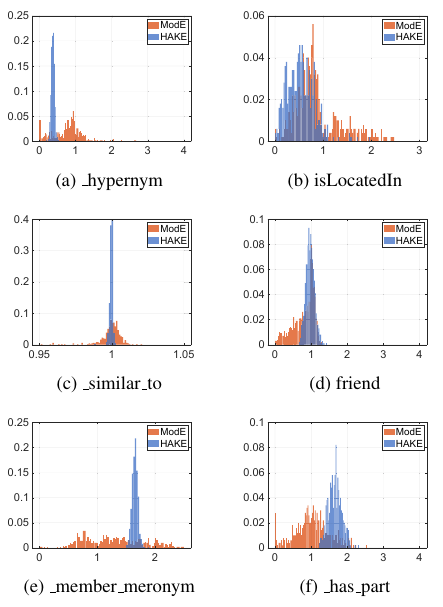}
\caption{Distribution histograms of moduli of some relations. The relations are drawn from WN18RR, FB15k-237 and YAGO3-10 dataset. The relation in (d) is \textit{/celebrities/celebrity/celebrity\_friends/celebrities/friendship/friend}. Let \textit{friend} denote the relation for simplicity.}
\label{fig:wn18rr_histogram}
\end{figure}

\subsection{Analysis on Relation Embeddings}
In this part, we first show that HAKE can effectively model the hierarchy structures by analyzing the moduli of relation embeddings. Then, we show that the phase part of HAKE can help us to distinguish entities at the same level of the hierarchy by analyzing the phases of relation embeddings.

\begin{figure}[!ht]
    \centering 
    \includegraphics{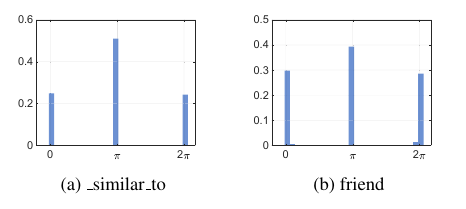}
\caption{Distribution histograms of phases of two relations that reflect the same hierarchy. The relations in Figure (a) and (b) are drawn from WN18RR and FB15k-237, respectively.}
\label{fig:phase_hist}
\end{figure}

In Figure \ref{fig:wn18rr_histogram}, we plot the distribution histograms of moduli of six relations. These relations are drawn from WN18RR, FB15k-237, and YAGO3-10. Specifically, the relations in Figures \ref{fig:wn18rr_histogram}a, \ref{fig:wn18rr_histogram}c, \ref{fig:wn18rr_histogram}e and \ref{fig:wn18rr_histogram}f  are drawn from WN18RR. The relation in Figure \ref{fig:wn18rr_histogram}d is drawn from FB15k-237. The relation in Figure \ref{fig:wn18rr_histogram}b is drawn from YAGO3-10. We divide the relations in Figure \ref{fig:wn18rr_histogram} into three groups.
\begin{enumerate}[(A)]
    \item Relations in Figures \ref{fig:wn18rr_histogram}c and \ref{fig:wn18rr_histogram}d
    connect the entities at the same level of the semantic hierarchy;
    \item Relations in Figures \ref{fig:wn18rr_histogram}a and \ref{fig:wn18rr_histogram}b
    represent that tail entities are at higher levels than head entities of the hierarchy;
    \item Relations in Figures \ref{fig:wn18rr_histogram}e and \ref{fig:wn18rr_histogram}f
    represent that tail entities are at lower levels than head entities of the hierarchy.
\end{enumerate}

\begin{table*}[ht]
    \caption{Ablation results on WN18RR, FB15k-237 and YAGO3-10 datasets. The symbols \textbf{m}, \textbf{p}, and \textbf{b} represent the modulus part, the phase part, and the mixture bias term, respectively.}
    \centering
    \begin{tabular}{ccc c c c c  c c c c  c c c c }
        \toprule
          &&&\multicolumn{4}{c}{\textbf{WN18RR}}&  \multicolumn{4}{c}{\textbf{FB15k-237}} & \multicolumn{4}{c}{\textbf{YAGO3-10}}\\
         \cmidrule(lr){4-7}
         \cmidrule(lr){8-11}
         \cmidrule(lr){12-15}
         \textbf{m} & \textbf{p}& \textbf{b}& MRR & H@1 & H@3 & H@10 & MRR & H@1 & H@3 & H@10 & MRR & H@1 & H@3 & H@10 \\
        \midrule
        \checkmark&&& .240 & .047 & .404 & .527 & .258 & .121 & .333 & .508 & .476 & .374 & .541 & .658\\
        &\checkmark&& .465 & .423 & .480 & .550 & .324 & .226 & .361 & .519 & .480 & .383 & .532 & .664 \\
        \midrule
        \checkmark&\checkmark& & .496 & .449 & \textbf{.517} & \textbf{.584} & .336 & .239 & .373 & .533 & .522 & .429 & .581 & .693\\
         \midrule
           \checkmark &\checkmark&\checkmark& \textbf{.497} & \textbf{.452} & .516 & .582 & \textbf{.346} & \textbf{.250} & \textbf{.381} & \textbf{.542} & \textbf{.545} & \textbf{.462} & \textbf{.596} & \textbf{.694} \\
        \bottomrule
    \end{tabular}
    \label{table:ablation_bias}
\end{table*}
As described in the model description section, we expect entities at higher levels of the hierarchy to have small moduli. The experiments validate our expectation. For both ModE and HAKE, most entries of the relations in the group (A) take values around one, which leads to that the head entities and tail entities have approximately the same moduli. In the group (B), most entries of the relations take values less than one, which results in that the head entities have smaller moduli than the tail entities. The cases in the group (C) are contrary to that in the group (B). These results show that our model can capture the semantic hierarchies in knowledge graphs.
Moreover, compared with ModE, the relation embeddings' moduli of HAKE have lower variances, which shows that HAKE can model hierarchies more clearly. 

As mentioned above, relations in the group (A) reflect the same semantic hierarchy, and are expected to have the moduli of about one. Obviously, it is hard to distinguish entities linked by these relations only using the modulus part. In Figure \ref{fig:phase_hist}, we plot the phases of the relations in the group (A). The results show that the entities at the same level of the hierarchy can be distinguished by their phases, as many phases have the values of $\pi$.

\begin{table}[ht]
    \centering
    \caption{Comparison results with TKRL models \citep{hType} on FB15k dataset. RHE, WHE, RHE+STC, and WHE+STC are four versions of TKRL model , of which the results are taken from the original paper.}
    \resizebox{.95\columnwidth}{!}{
    \begin{tabular}{l *{5}{c}}
    \toprule
    &HAKE &RHE &WHE &RHE+STC &WHE+STC\\ 
    \midrule
    H@10 &\textbf{.884} &.694 &.696 &.731 &.734 \\
    \bottomrule
    \end{tabular}
    }
    \label{table:cmp_tkrl}
\end{table}

\subsection{Analysis on Entity Embeddings}
In this part, to further show that HAKE can capture the semantic hierarchies between entities, we visualize the embeddings of several entity pairs.

We plot the entity embeddings of two models: the previous state-of-the-art RotatE and our proposed HAKE. RotatE regards each entity as a group of complex numbers. As a complex number can be seen as a point on a 2D plane, we can plot the entity embeddings on a 2D plane. As for HAKE, we have mentioned that it maps entities into the polar coordinate system. Therefore, we can also plot the entity embeddings generated by HAKE on a 2D plane based on their polar coordinates. For a fair comparison, we set $k=500$. That is, each plot contains $500$ points, and the actual dimension of entity embeddings is $1000$.  Note that we use the logarithmic scale to better display the differences between entity embeddings. As all the moduli have values less than one, after applying the logarithm operation, the larger radii in the figures will actually represent smaller modulus.

Figure \ref{fig:scatter_modulus} shows the visualization results of three triples from the WN18RR dataset. Compared with the tail entities, the head entities in Figures \ref{fig:scatter_modulus}a, \ref{fig:scatter_modulus}b, and \ref{fig:scatter_modulus}c are at lower levels, similar levels, higher levels in the semantic hierarchy, respectively. We can see that there exist clear concentric circles in the visualization results of HAKE, which demonstrates that HAKE can effectively model the semantic hierarchies. However, in RotatE, the entity embeddings in all three subfigures are mixed, making it hard to distinguish entities at different levels in the hierarchy.


\begin{figure}[!ht]
  \centering 
 \begin{subfigure}[b]{0.4\textwidth}
  \centering
  \includegraphics[width=93pt]{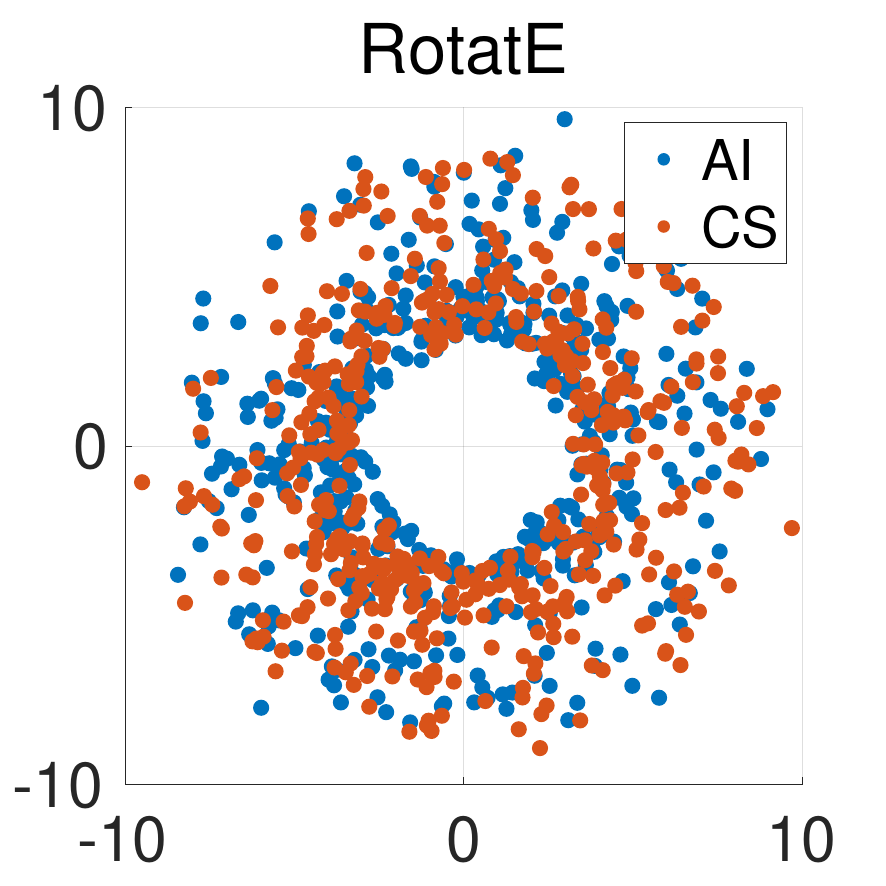}\hspace{4mm}
  \includegraphics[width=93pt]{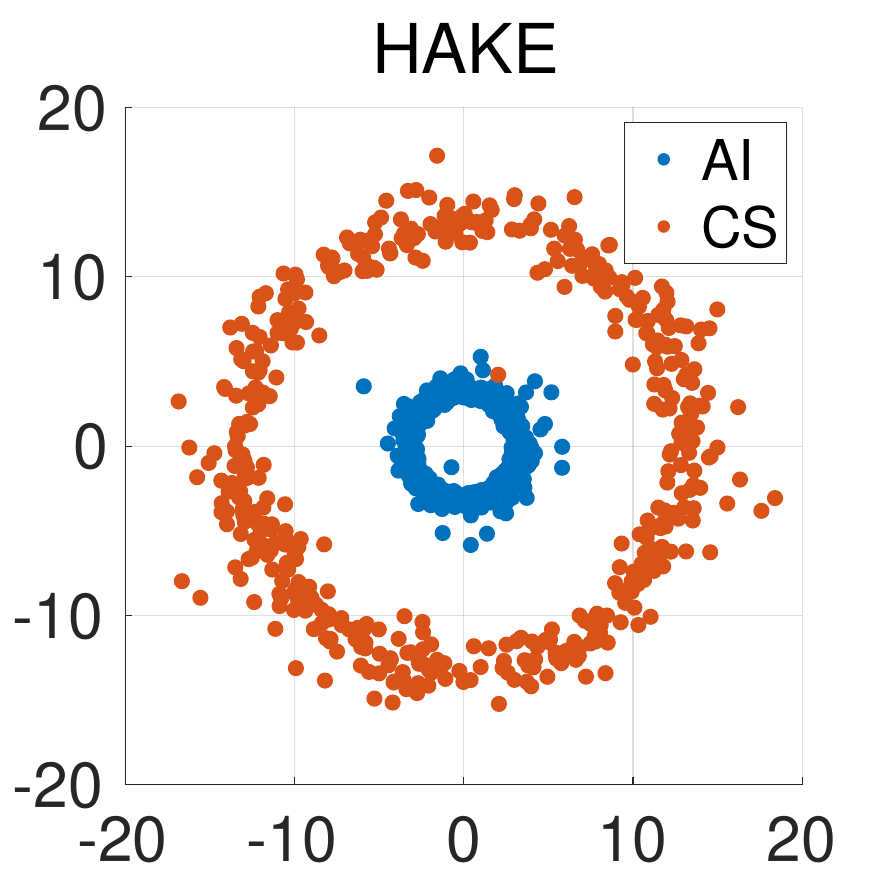}
  \caption{\textit{(AI, \_hypernym, CS)}}
  \label{fig:scatter_modulus_sub1}
\end{subfigure} 

\vspace{2mm}
\begin{subfigure}[b]{0.4\textwidth}
  \centering
  \includegraphics[width=93pt]{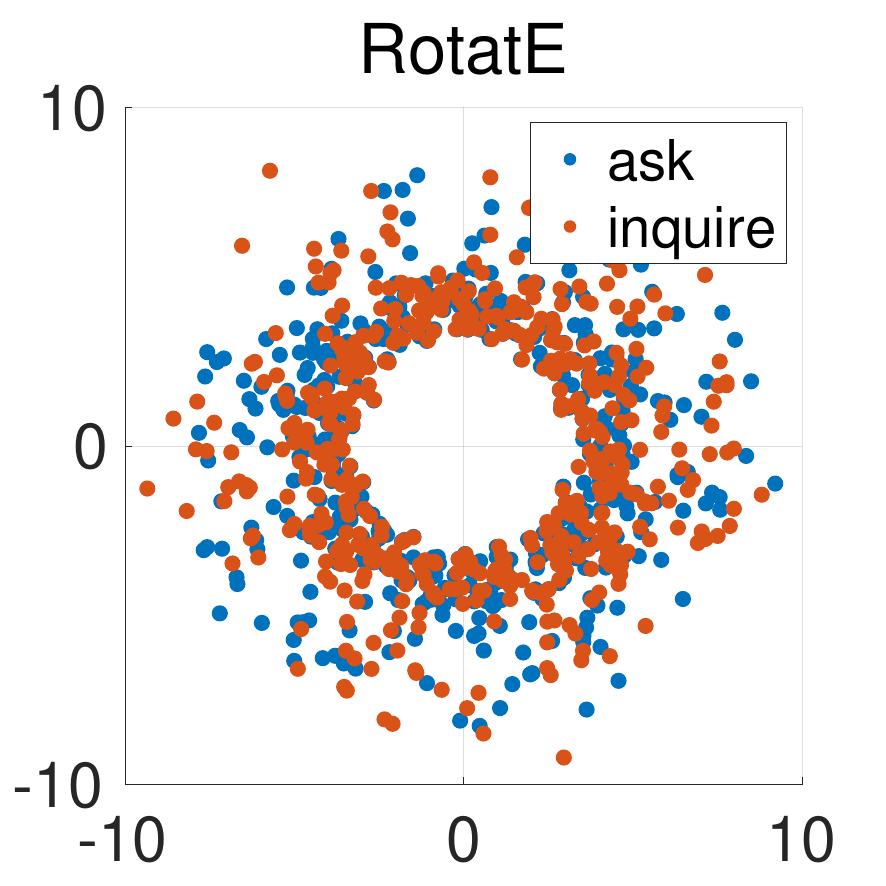}\hspace{4mm}
  \includegraphics[width=93pt]{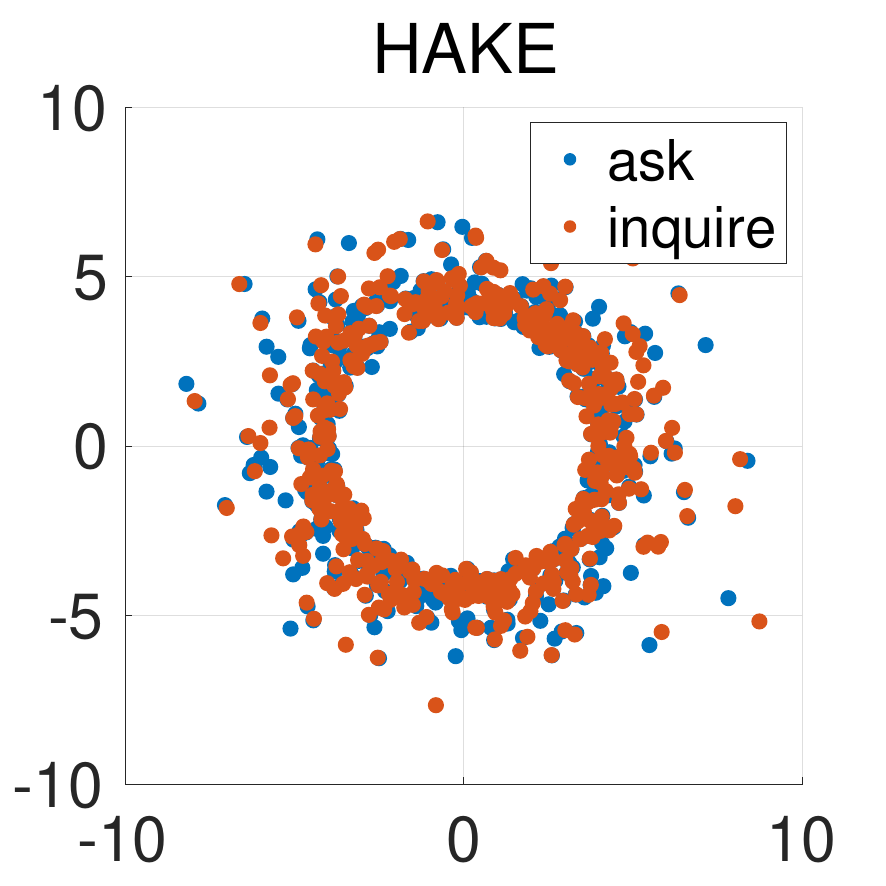}
  \caption{\textit{(ask, \_verb\_group, inquire)}}
  \label{fig:scatter_modulus_sub2}
\end{subfigure} 

\vspace{2mm}
\begin{subfigure}[b]{0.4\textwidth}
  \centering
  \includegraphics[width=93pt]{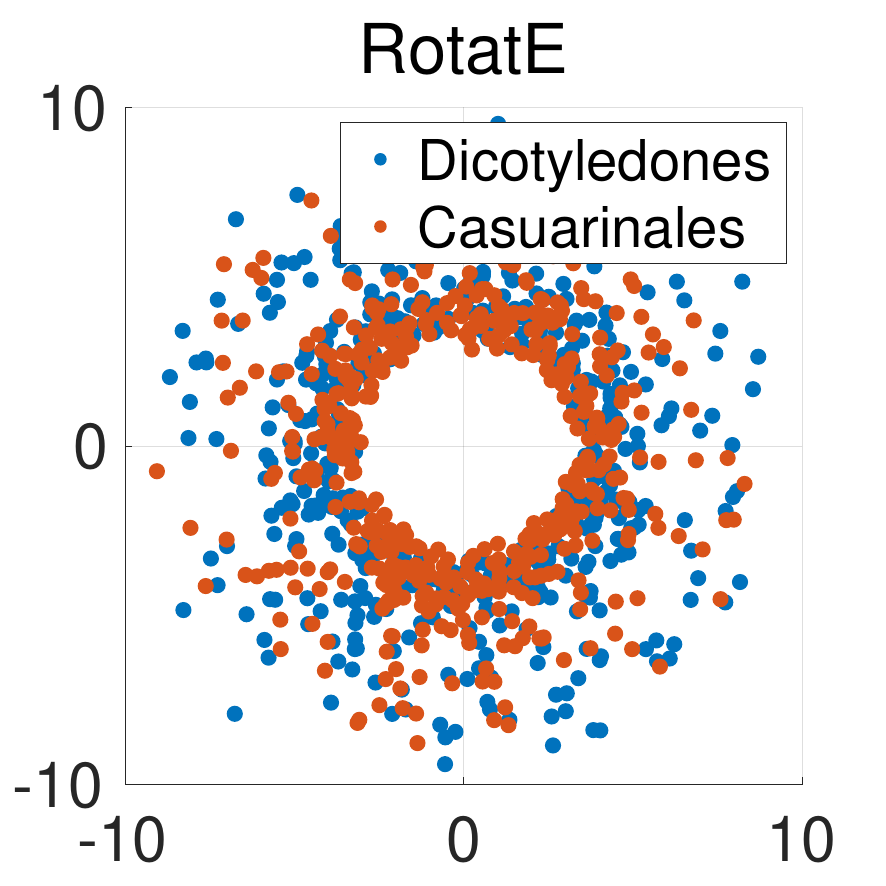}\hspace{4mm}
  \includegraphics[width=93pt]{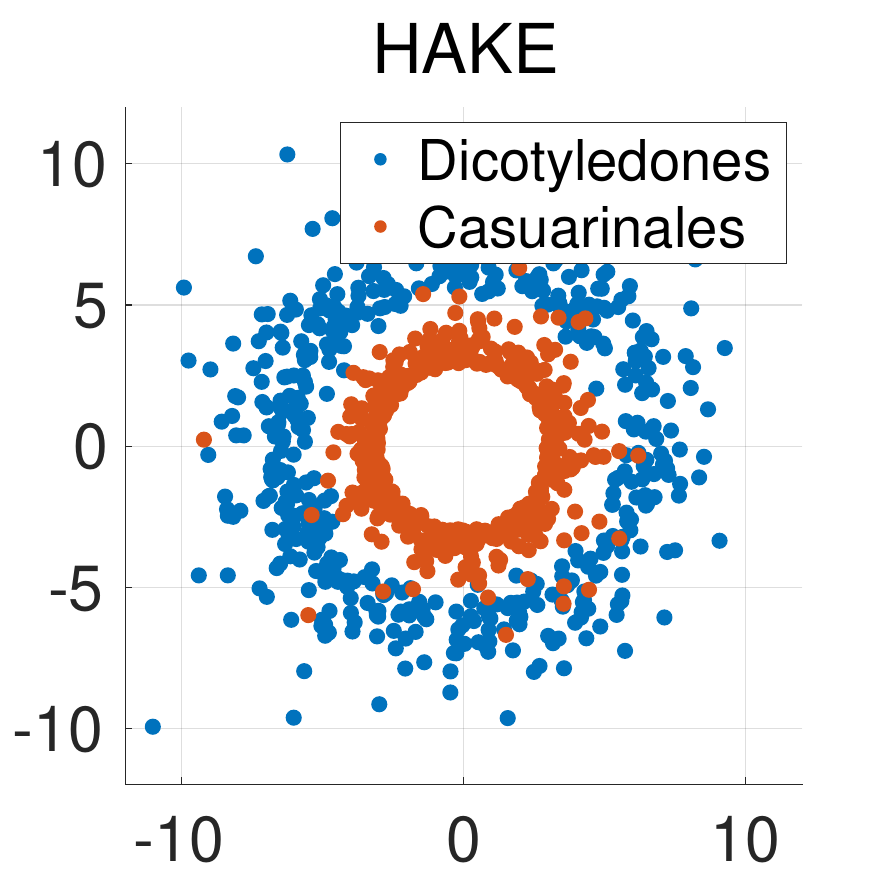}
  \caption{\textit{(Dicotyledones, \_member\_meronym, Casuarinales)}}
  \label{fig:scatter_modulus_sub3}
\end{subfigure} 
\caption{Visualization of the embeddings of several entity pairs from WN18RR dataset.
}
\label{fig:scatter_modulus}
\end{figure}

\subsection{Ablation Studies}
In this part, we conduct ablation studies on the modulus part and the phase part of HAKE, as well as the mixture bias item.  Table \ref{table:ablation_bias} shows the results on three benchmark datasets.

We can see that the bias can improve the performance of HAKE on nearly all metrics. Specifically, the bias improves the H@1 score of $4.7\%$ on YAGO3-10 dataset, which illustrates the effectiveness of the bias.

We also observe that the modulus part of HAKE does not perform well on all datasets, due to its inability to distinguish the entities at the same level of the hierarchy. When only using the phase part, HAKE degenerates to the pRotatE model \citep{rotate}. It performs better than the modulus part, because it can well model entities at the same level of the hierarchy. However, our HAKE model significantly outperforms the modulus part and the phase part on all datasets, which demonstrates the importance to combine the two parts for modeling semantic hierarchies in knowledge graphs.

\subsection{Comparison with Other Related Work}
We compare our models with TKRL models \citep{hType}, which also aim to model the hierarchy structures. For the difference between HAKE and TKRL, please refer to the Related Work section.  Table \ref{table:cmp_tkrl} shows the H@10 scores of HAKE and TKRLs on FB15k dataset. The best performance of TKRL is .734 obtained by the WHE+STC version, while the H@10 score of our HAKE model is .884. The results show that HAKE significantly outperforms TKRL, though it does not require additional information.

\section{Conclusion}
To model the semantic hierarchies in knowledge graphs, we propose a novel hierarchy-aware knowledge graph embedding model---HAKE---which maps entities into the polar coordinate system. Experiments show that our proposed HAKE significantly outperforms several existing state-of-the-art methods on benchmark datasets for the link prediction task. A further investigation shows that HAKE is capable of modeling entities at both different levels and the same levels in the semantic hierarchies.

\section*{Acknowledgments}
This work was supported in part by National Science Foundations of China grants 61822604, U19B2026, 61836006, and 62021001, and the Fundamental Research Funds for the Central Universities grant WK3490000004.

\bibliographystyle{aaai}
\bibliography{3019.bibfile}

\newpage
\section*{Appendix}
In this appendix, we will provide analysis on relation patterns, negative entity embeddings, and moduli of entity embeddings. Then, we will give more visualization results on semantic hierarchies.

\section*{A.\,\,\, Analysis on Relation Patterns}
In this section, we prove that our HAKE model can infer the (anti)symmetry, inversion and composition relation patterns. Detailed propositions and their proofs are as follows.

\begin{prop}
    HAKE can infer the (anti)symmetry pattern.
\end{prop}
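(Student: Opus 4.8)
The plan is to follow the style of the RotatE relation-pattern analysis: show that there are admissible choices of the relation embedding $(\mathbf{r}_m,\mathbf{r}_p)$ under which the set of triples with $d_r(\mathbf{h},\mathbf{t})=0$ is a symmetric relation, and other admissible choices under which it is an antisymmetric relation; exhibiting such choices is exactly what ``HAKE can infer the pattern'' means. First I would record the elementary reduction: since $d_{r,m}$ is an $\ell_2$ norm and $\|\sin((\mathbf{h}_p+\mathbf{r}_p-\mathbf{t}_p)/2)\|_1=0$ precisely when every entry of $\mathbf{h}_p+\mathbf{r}_p-\mathbf{t}_p$ is an integer multiple of $2\pi$, we have $d_r(\mathbf{h},\mathbf{t})=0$ if and only if $\mathbf{h}_m\circ\mathbf{r}_m=\mathbf{t}_m$ and $\mathbf{h}_p+\mathbf{r}_p\equiv\mathbf{t}_p\pmod{2\pi}$. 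Everything then reduces to coordinatewise arithmetic in $\mathbb{R}_+$ and in $\mathbb{R}/2\pi\mathbb{Z}$.

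For the symmetry part I would argue both directions. (Necessity.) If $r$ is symmetric and $(h,r,t)$ holds, then $(t,r,h)$ holds too, so $\mathbf{h}_m\circ\mathbf{r}_m=\mathbf{t}_m$ and $\mathbf{t}_m\circ\mathbf{r}_m=\mathbf{h}_m$; substituting gives $\mathbf{r}_m\circ\mathbf{r}_m=\mathbf{1}$ (on the coordinates where the entity moduli are nonzero), and since $\mathbf{r}_m\in\mathbb{R}_+^k$ this forces $\mathbf{r}_m=\mathbf{1}$. Likewise $\mathbf{h}_p+\mathbf{r}_p\equiv\mathbf{t}_p$ and $\mathbf{t}_p+\mathbf{r}_p\equiv\mathbf{h}_p\pmod{2\pi}$ yield $2\mathbf{r}_p\equiv\mathbf{0}\pmod{2\pi}$, so each entry of $\mathbf{r}_p$ lies in $\{0,\pi\}$ using $\mathbf{r}_p\in[0,2\pi)^k$. (Sufficiency.) Conversely, if $\mathbf{r}_m=\mathbf{1}$ and $[\mathbf{r}_p]_i\in\{0,\pi\}$ for all $i$, and $(h,r,t)$ holds, then $\mathbf{t}_m\circ\mathbf{r}_m=\mathbf{t}_m=\mathbf{h}_m$ and $\mathbf{t}_p+\mathbf{r}_p\equiv\mathbf{h}_p+2\mathbf{r}_p\equiv\mathbf{h}_p\pmod{2\pi}$, so $(t,r,h)$ holds; hence such an embedding realizes a symmetric $r$.

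For the antisymmetry part I would exhibit a single admissible choice that blocks symmetry: pick $\mathbf{r}_m$ with some entry $[\mathbf{r}_m]_i\neq 1$ (any positive value $\neq 1$), and/or $\mathbf{r}_p$ with some entry outside $\{0,\pi\}$, together with a nonempty relation, e.g.\ the triple obtained by setting $\mathbf{t}_m=\mathbf{h}_m\circ\mathbf{r}_m$ and $\mathbf{t}_p\equiv\mathbf{h}_p+\mathbf{r}_p\pmod{2\pi}$. If $(h,r,t)$ holds and $(t,r,h)$ also held, the computation above would force $[\mathbf{r}_m]_j^2=1$ and $2[\mathbf{r}_p]_j\equiv 0\pmod{2\pi}$ in \emph{every} coordinate $j$, contradicting the choice; hence $(h,r,t)\Rightarrow\neg(t,r,h)$, i.e.\ $r$ is antisymmetric.

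The main obstacle is not conceptual but careful bookkeeping in the phase component: one must handle the sine-based distance and the $\mathrm{mod}\ 2\pi$ constraint correctly, in particular the implication ``$2\mathbf{r}_p\equiv\mathbf{0}\pmod{2\pi}$ with $\mathbf{r}_p\in[0,2\pi)^k$ $\Rightarrow$ entries in $\{0,\pi\}$'' and the sufficiency check at $[\mathbf{r}_p]_i=\pi$, where the doubled phase wraps back to $0$. The modulus component is comparatively easy because the positivity constraint $\mathbf{r}_m\in\mathbb{R}_+^k$ removes the $\pm 1$ sign ambiguity that the bilinear-style models would otherwise have to deal with.
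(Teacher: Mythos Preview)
Your proposal is correct and follows essentially the same approach as the paper: from $r(x,y)$ and $r(y,x)$ derive $\mathbf{r}_m\circ\mathbf{r}_m=\mathbf{1}$ and $2\mathbf{r}_p\equiv\mathbf{0}\pmod{2\pi}$, and for antisymmetry observe that violating these constraints suffices. Your write-up is in fact more thorough than the paper's---you add the sufficiency direction and use the positivity constraint $\mathbf{r}_m\in\mathbb{R}_+^k$ to sharpen $\mathbf{r}_m\circ\mathbf{r}_m=\mathbf{1}$ to $\mathbf{r}_m=\mathbf{1}$ and to pin down $[\mathbf{r}_p]_i\in\{0,\pi\}$---but the underlying argument is identical.
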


\begin{proof}
    If $r(x, y)$ and $r(y, x)$ hold, we have
    \begin{align*}
        \begin{cases}
            \textbf{x}_m\circ \textbf{r}_m=\textbf{y}_m, \\ \textbf{y}_m\circ \textbf{r}_m=\textbf{x}_m, \\ (\textbf{x}_p+\textbf{r}_p) \mod 2\pi=\textbf{y}_p, \\ (\textbf{y}_p+\textbf{r}_p) \mod 2\pi=\textbf{x}_p.
        \end{cases}
    \end{align*}
    Then we have
    \begin{gather*}
        \textbf{x}_m \circ \textbf{r}_m \circ \textbf{r}_m =\textbf{x}_m \Rightarrow \textbf{r}_m \circ \textbf{r}_m = \textbf{1}, \\
        (\textbf{r}_p + \textbf{r}_p) \mod 2\pi = 0.
    \end{gather*}

    Otherwise, if $r(x, y)$ and $\neg r(y, x)$ hold, we have
    \begin{gather*}
        \textbf{r}_m \circ \textbf{r}_m \neq \textbf{1}, \\
        (\textbf{r}_p + \textbf{r}_p) \mod 2\pi \neq 0.
    \end{gather*}
\end{proof}

\begin{prop}
    HAKE can infer the inversion pattern.
\end{prop}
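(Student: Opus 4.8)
The plan is to follow the same template as the proof of the (anti)symmetry pattern just given. Recall that the inversion pattern asserts that two relations $r_1$ and $r_2$ are mutual inverses, i.e., $r_1(x,y)$ holds if and only if $r_2(y,x)$ holds for all entities $x,y$. First I would suppose that both $r_1(x,y)$ and $r_2(y,x)$ hold and write down the four HAKE constraints they impose:
\begin{align*}
\begin{cases}
\textbf{x}_m\circ \textbf{r}_{1,m}=\textbf{y}_m, \\ \textbf{y}_m\circ \textbf{r}_{2,m}=\textbf{x}_m, \\ (\textbf{x}_p+\textbf{r}_{1,p}) \mod 2\pi=\textbf{y}_p, \\ (\textbf{y}_p+\textbf{r}_{2,p}) \mod 2\pi=\textbf{x}_p.
\end{cases}
\end{align*}
Next I would compose these equations pairwise. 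Substituting the first modulus equation into the second yields $\textbf{x}_m\circ \textbf{r}_{1,m}\circ \textbf{r}_{2,m}=\textbf{x}_m$, and since this holds for the embedding of $x$ we conclude $\textbf{r}_{1,m}\circ \textbf{r}_{2,m}=\textbf{1}$; similarly the two phase equations combine to $(\textbf{r}_{1,p}+\textbf{r}_{2,p}) \mod 2\pi=\textbf{0}$. This isolates the exact condition on the relation embeddings that encodes inversion.

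For the converse direction---showing that HAKE can in fact \emph{represent} inversion, not merely that inversion forces a constraint---I would start from relation embeddings satisfying $\textbf{r}_{1,m}\circ \textbf{r}_{2,m}=\textbf{1}$ and $(\textbf{r}_{1,p}+\textbf{r}_{2,p}) \mod 2\pi=\textbf{0}$. These lie in the model's parameter space, since the element-wise reciprocal of $\textbf{r}_{1,m}$ is again in $\mathbb{R}_+^k$ whenever $\textbf{r}_{1,m}\in\mathbb{R}_+^k$, and $\textbf{r}_{2,p}=(2\pi\textbf{1}-\textbf{r}_{1,p}) \mod 2\pi\in[0,2\pi)^k$. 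Then, assuming $r_1(x,y)$ holds, i.e. $\textbf{x}_m\circ \textbf{r}_{1,m}=\textbf{y}_m$ and $(\textbf{x}_p+\textbf{r}_{1,p}) \mod 2\pi=\textbf{y}_p$, I would take the Hadamard product of the modulus equation with $\textbf{r}_{2,m}$ and add $\textbf{r}_{2,p}$ to the phase equation to recover $\textbf{y}_m\circ \textbf{r}_{2,m}=\textbf{x}_m$ and $(\textbf{y}_p+\textbf{r}_{2,p}) \mod 2\pi=\textbf{x}_p$, which is precisely $r_2(y,x)$.

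The main obstacle here is bookkeeping rather than mathematics: one must check that the element-wise reciprocal of the modulus relation vector stays strictly positive (so that it is a legal HAKE relation embedding) and that all phase arithmetic is performed consistently modulo $2\pi$ with representatives taken in $[0,2\pi)$. Both checks are immediate once the setup is written down carefully, so I expect no real difficulty beyond presenting them cleanly in the same compact style as the preceding proposition.
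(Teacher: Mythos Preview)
Your proposal is correct and follows essentially the same approach as the paper: write down the four HAKE constraints from $r_1(x,y)$ and $r_2(y,x)$, compose them, and conclude $\textbf{r}_{1,m}\circ\textbf{r}_{2,m}=\textbf{1}$ (the paper writes this as $\textbf{r}_{1,m}=\textbf{r}_{2,m}^{-1}$) together with $(\textbf{r}_{1,p}+\textbf{r}_{2,p})\bmod 2\pi=\textbf{0}$. Your converse direction and feasibility checks (positivity of the reciprocal, phases staying in $[0,2\pi)$) go beyond what the paper actually proves, but they are correct and make the argument more complete.
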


\begin{proof}
    If $r_1(x, y)$ and $r_2(y, x)$ hold, we have
    \begin{align*}
        \begin{cases}
        \textbf{x}_m\circ \textbf{r}_{1,m}=\textbf{y}_m, \\ \textbf{y}_m\circ \textbf{r}_{2,m}=\textbf{x}_m, \\ (\textbf{x}_p+\textbf{r}_{1,p}) \mod 2\pi=\textbf{y}_p, \\ (\textbf{y}_p+\textbf{r}_{2,p}) \mod 2\pi=\textbf{x}_p.
        \end{cases}
    \end{align*}
    
    Then, we have
    \begin{gather*}
    \textbf{x}_m \circ \textbf{r}_{1,m} \circ \textbf{r}_{2,m} =\textbf{x}_m \Rightarrow \textbf{r}_{1,m} = \textbf{r}_{2,m}^{-1}, \\
    (\textbf{r}_{1,p} + \textbf{r}_{2,p}) \mod 2\pi = 0.
    \end{gather*}
\end{proof}

\begin{figure}[!h]
  \centering 
  \begin{subfigure}[b]{0.2\textwidth}
  \includegraphics[width=105pt]{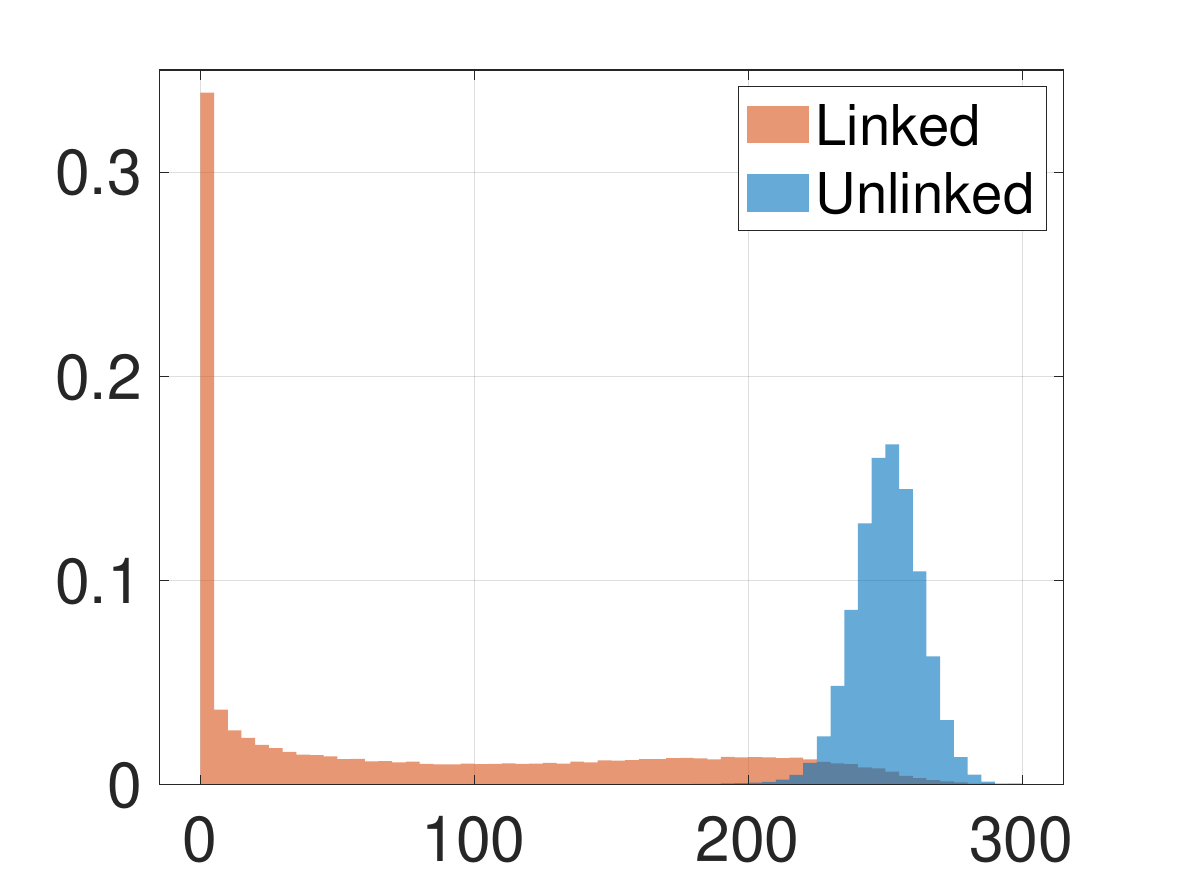}
  \caption{WN18RR}
  \end{subfigure}
  \begin{subfigure}[b]{0.2\textwidth}
  \includegraphics[width=105pt]{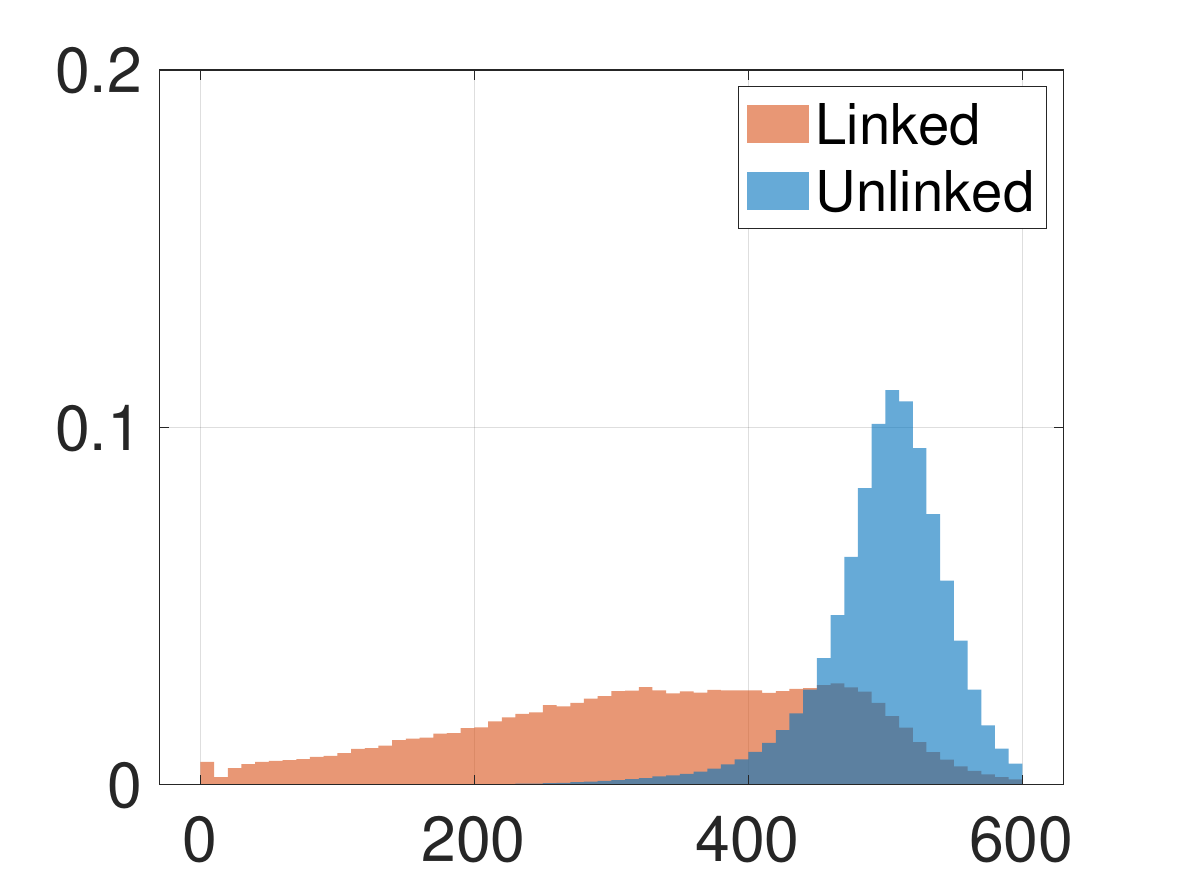}
  \caption{FB15k-237}
  \end{subfigure}
\caption{Illustration of the negative modulus of linked and unlinked entity pairs. For each pair of entities $h$ and $t$, if there is a link between $h$ and $t$, we label them as ``Linked''. Otherwise, we label them as ``Unlinked''. The x-axis represents the number of $i$ that $[\textbf{h}_m]_i$ and $[\textbf{t}_m]_i$ have different signs, the y-axis represents the frequency.}
\label{fig:neg_channel_illustration}
\end{figure}

\vspace{5mm}
\begin{figure}[!h]
  \centering 
  \begin{subfigure}[b]{0.2\textwidth}
  \includegraphics[width=105pt]{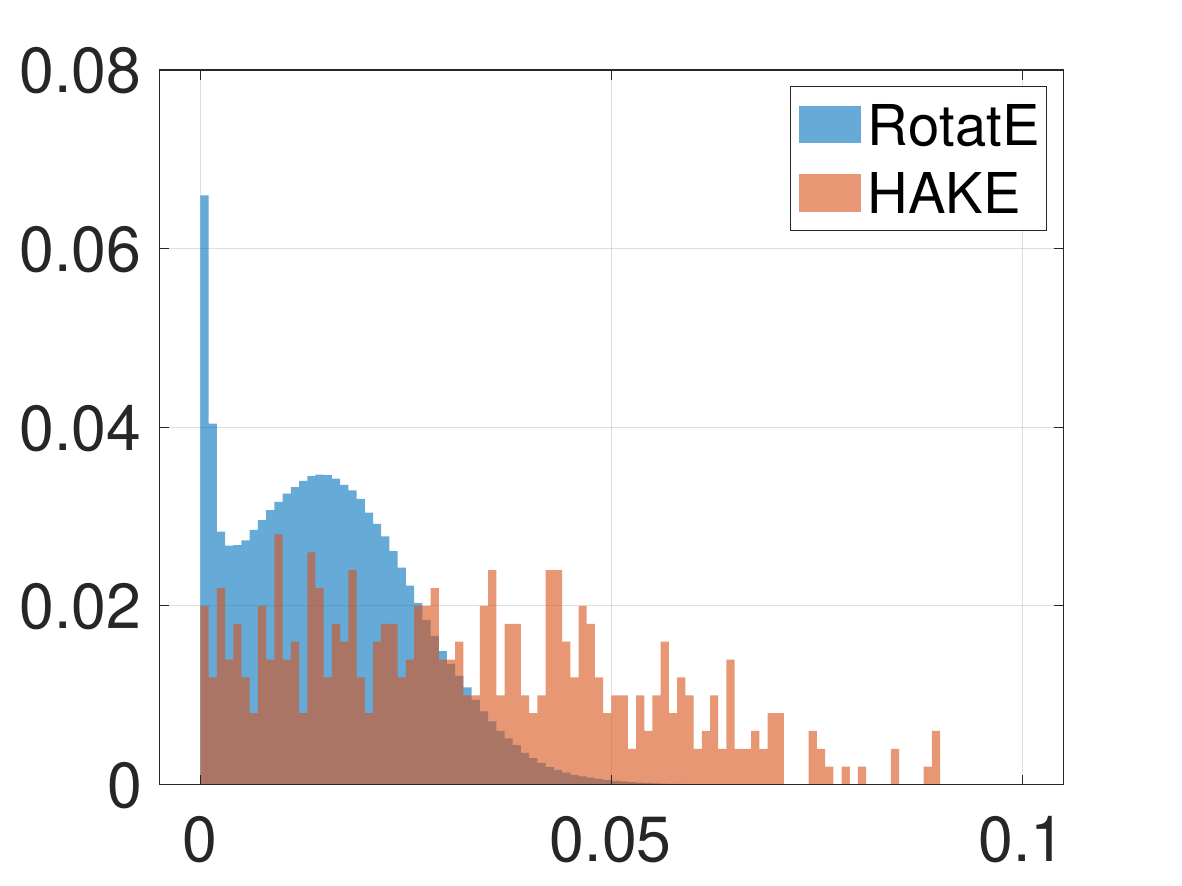}
  \caption{WN18RR}
  \end{subfigure}
  \begin{subfigure}[b]{0.2\textwidth}
  \includegraphics[width=105pt]{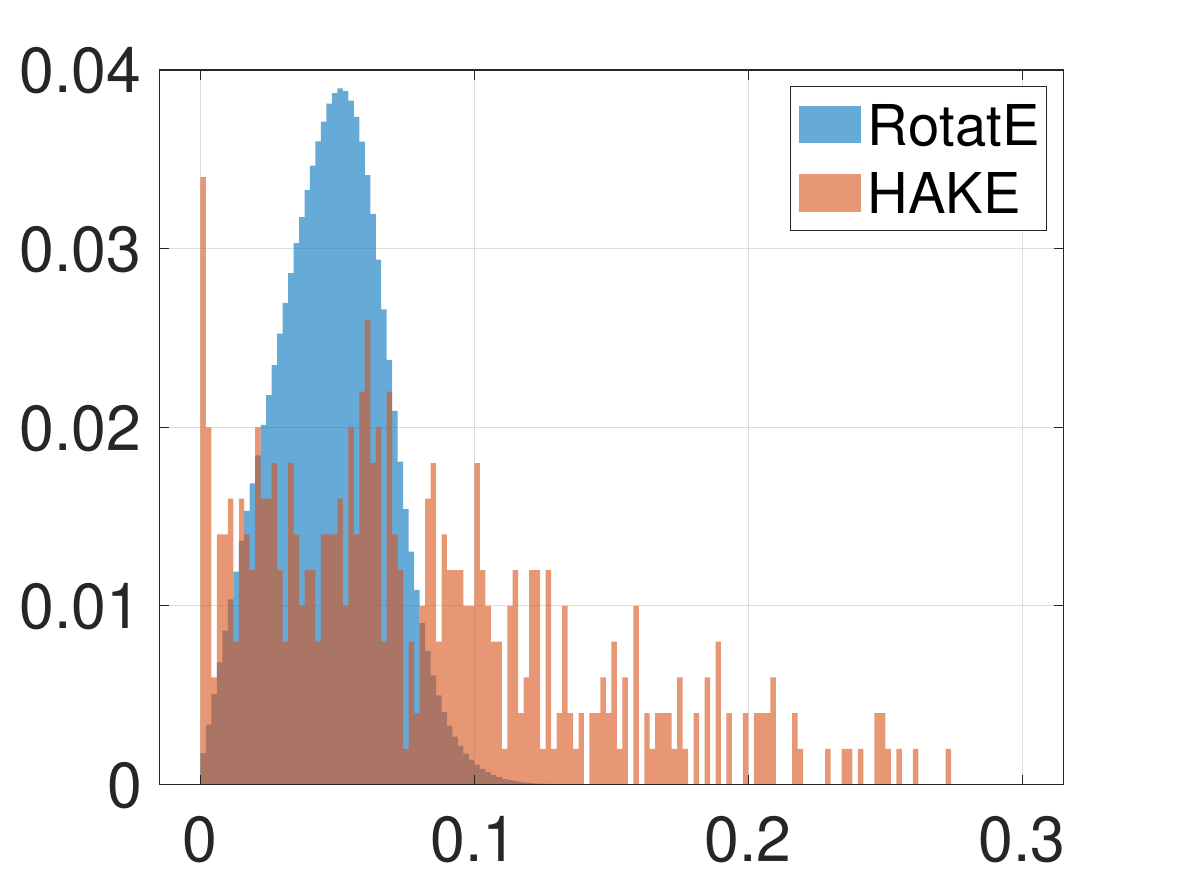}
  \caption{YAGO3-10}
  \end{subfigure}
\caption{Histograms of the modulus of entity embeddings. Compared with RotatE, the modulus of entity embeddings in HAKE are more dispersed, making it to have more potential to model the semantic hierarchies.}
\label{fig:total_modulus_hist}
\end{figure}

\begin{prop}
    HAKE can infer the composition pattern.
\end{prop}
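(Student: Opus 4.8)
The plan is to follow the same template as the proofs of the two preceding propositions. Recall that the composition pattern states that whenever $r_1(x,y)$ and $r_2(y,z)$ hold, a relation $r_3(x,z)$ holds, where $r_3$ is the composition of $r_1$ and $r_2$; to show that HAKE \emph{infers} this pattern it suffices to exhibit, for any embeddings of $r_1$ and $r_2$, a valid embedding of $r_3$ that is consistent with the HAKE constraints for all three triples simultaneously.

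First I would write down the HAKE constraints for $r_1(x,y)$, $r_2(y,z)$ and $r_3(x,z)$: three modulus equations, $\textbf{x}_m\circ\textbf{r}_{1,m}=\textbf{y}_m$, $\textbf{y}_m\circ\textbf{r}_{2,m}=\textbf{z}_m$, $\textbf{x}_m\circ\textbf{r}_{3,m}=\textbf{z}_m$, and three phase equations, $(\textbf{x}_p+\textbf{r}_{1,p})\mod 2\pi=\textbf{y}_p$, $(\textbf{y}_p+\textbf{r}_{2,p})\mod 2\pi=\textbf{z}_p$, $(\textbf{x}_p+\textbf{r}_{3,p})\mod 2\pi=\textbf{z}_p$. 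For the modulus part, chaining the first two gives $\textbf{x}_m\circ\textbf{r}_{1,m}\circ\textbf{r}_{2,m}=\textbf{z}_m$, and comparing with the third equation yields $\textbf{r}_{3,m}=\textbf{r}_{1,m}\circ\textbf{r}_{2,m}$; here I would note that $\textbf{r}_{1,m},\textbf{r}_{2,m}\in\mathbb{R}_+^k$, so the Hadamard product is entrywise invertible and the cancellation of $\textbf{x}_m$ is justified, and moreover $\textbf{r}_{1,m}\circ\textbf{r}_{2,m}$ again lies in $\mathbb{R}_+^k$, so it is an admissible modulus embedding. For the phase part, adding the first two congruences and comparing with the third gives $\textbf{r}_{3,p}=(\textbf{r}_{1,p}+\textbf{r}_{2,p})\mod 2\pi$, which lies in $[0,2\pi)^k$ and is therefore an admissible phase embedding.

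Combining the two, the choice $\textbf{r}_{3,m}=\textbf{r}_{1,m}\circ\textbf{r}_{2,m}$ and $\textbf{r}_{3,p}=(\textbf{r}_{1,p}+\textbf{r}_{2,p})\mod 2\pi$ makes all six constraints hold simultaneously, which is exactly the assertion that HAKE can infer the composition pattern. This is essentially a routine computation, so I do not expect a genuine obstacle; the only point deserving a moment's attention is the cancellation step in the modulus part, which silently relies on the modeling restriction $\textbf{r}_m\in\mathbb{R}_+^k$ and would fail if zero entries were permitted.
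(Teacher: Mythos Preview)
Your proposal is correct and follows essentially the same approach as the paper: write down the six HAKE constraints for the three triples, chain the modulus equations via the Hadamard product and the phase equations via addition modulo $2\pi$, and read off the relation between the three relation embeddings. The only cosmetic difference is the labeling --- the paper takes $r_1(x,z)$, $r_2(x,y)$, $r_3(y,z)$ so that $r_1$ is the composite, whereas you take $r_3$ to be the composite --- and your version is slightly more careful in noting that $\textbf{r}_{1,m}\circ\textbf{r}_{2,m}\in\mathbb{R}_+^k$ remains an admissible modulus embedding and that cancellation uses the positivity restriction.
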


\begin{proof}
    If $r_1(x, z)$, $r_2(x, y)$ and $r_3(y, z)$ hold, we have
    \begin{align*}
        \begin{cases}
        \textbf{x}_m\circ \textbf{r}_{1,m}=\textbf{z}_m, \\
        \textbf{x}_m\circ \textbf{r}_{2,m}=\textbf{y}_m, \\ 
        \textbf{y}_m\circ \textbf{r}_{3,m}=\textbf{z}_m, \\
        (\textbf{x}_p+\textbf{r}_{1,p}) \mod 2\pi=\textbf{z}_p, \\
        (\textbf{x}_p+\textbf{r}_{2,p}) \mod 2\pi=\textbf{y}_p, \\ 
        (\textbf{y}_p+\textbf{r}_{3,p}) \mod 2\pi=\textbf{z}_p.
        \end{cases}
    \end{align*}
     Then we have
    
    \begin{gather*}
        \textbf{x}_m \circ {\textbf{r}_2}_m \circ {\textbf{r}_3}_m \circ \textbf{z}_m = \textbf{x}_m \circ {\textbf{r}_1}_m \circ \textbf{z}_m \\
        \Rightarrow {\textbf{r}_1}_m = {\textbf{r}_2}_m \circ {\textbf{r}_3}_m, \\
        ({\textbf{r}_1}_p - {\textbf{r}_2}_p - {\textbf{r}_3}_p) \mod 2\pi = 0.
    \end{gather*}
    
\end{proof}



\section*{B.\,\,\, Analysis on Negative Entity Embeddings}
We denote the linked entity pairs as the set of entity pairs linked by some relation, and denote the unlinked entity pairs as the set of entity pairs that no triple contains in the train/valid/test dataset. It is worth noting that the unlinked paris may contain valid triples, as the knowledge graph is incomplete. For both the linked and the unlinked entity pairs, we count the embedding entries of two entities that have different signs. Figure \ref{fig:neg_channel_illustration} shows the result.

For the linked entity pairs, as we expected, most of the entries have the same sign. Due to the large amount of unlinked entity pairs, we randomly sample a part of them for plotting. For the unlinked entity pairs, around half of the entries have different signs, which is consistent with the random initialization. The results support our hypothesis that the negative signs of entity embeddings can help our model to distinguish positive and negative triples. 

\section*{C.\,\,\, Analysis on Moduli of Entity Embeddings}

Figure \ref{fig:total_modulus_hist} shows the modulus of entity embeddings. We can observe that RotatE encourages the modulus of embeddings to be the same, as the relations are modeled as rotations in a complex space. Compared with RotatE, the modulus of entity embeddings in HAKE are more dispersed, making it to have more potential to model the semantic hierarchies.

\section*{D.\,\,\, More Results on Semantic Hierarchies}

In this part, we visualize more triples from WN18RR. We plot the head and tail entities on 2D planes using the same method as that in the main text. The visualization results are in Figure \ref{fig:scatter_modulus2}, where the subcaptions demonstrate the corresponding triples. The figures show that, compared with RotatE, our HAKE model can better model the entities both in different hierarchies and in the same hierarchy. 

\vspace{5mm}
\begin{figure}[!ht]
  \centering 
 \begin{subfigure}[b]{0.4\textwidth}
  \centering
  \includegraphics[width=93pt]{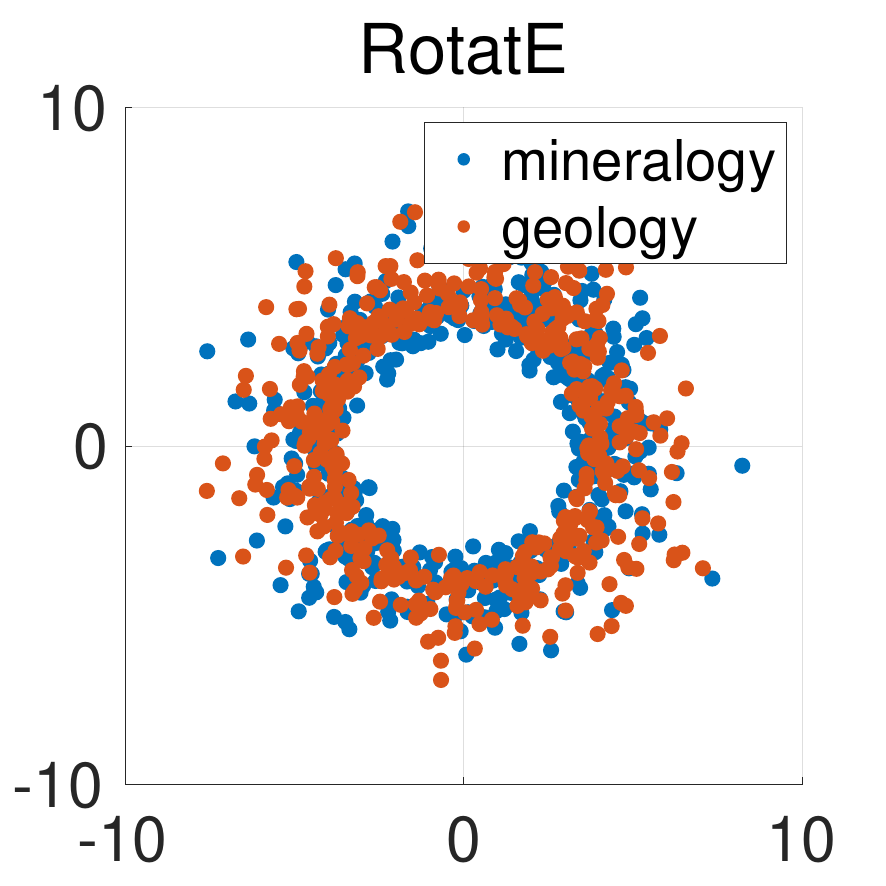}\hspace{4mm}
  \includegraphics[width=93pt]{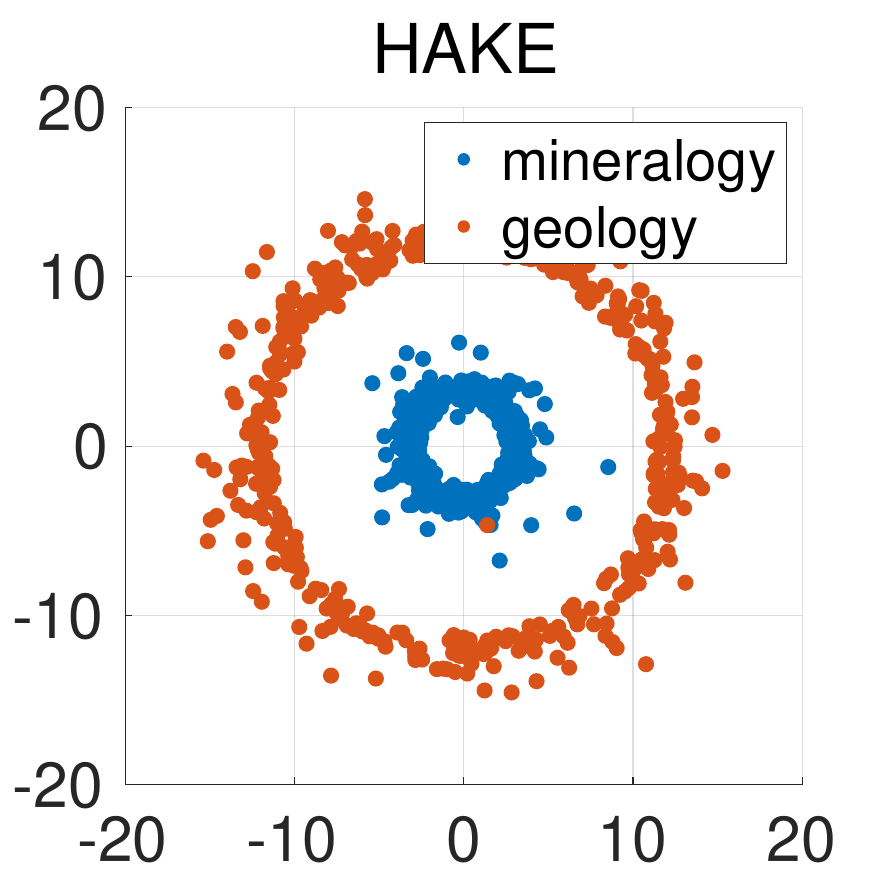}
  \caption{\textit{(mineralogy, \_hypernym, geology)}}
  \label{fig:scatter_modulus_sub1}
\end{subfigure} 

\vspace{2mm}
\begin{subfigure}[b]{0.4\textwidth}
  \centering
  \includegraphics[width=93pt]{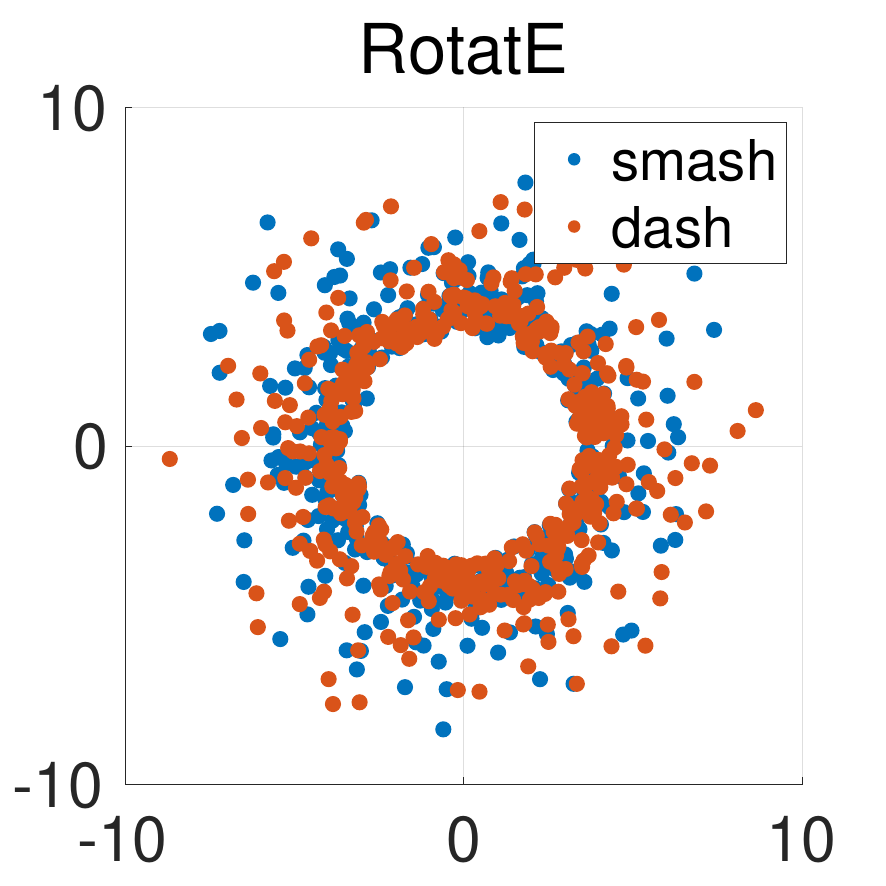}\hspace{4mm}
  \includegraphics[width=93pt]{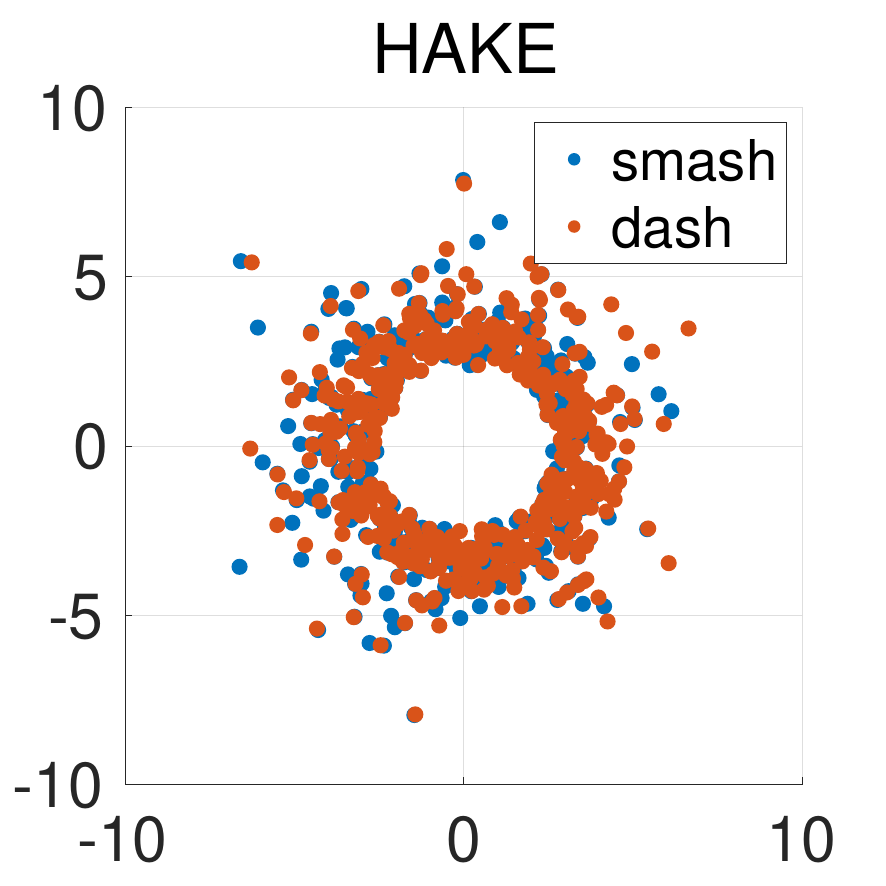}
  \caption{\textit{(smash, \_verb\_group, dash)}}
  \label{fig:scatter_modulus_sub2}
\end{subfigure} 

\vspace{2mm}
\begin{subfigure}[b]{0.4\textwidth}
  \centering
  \includegraphics[width=93pt]{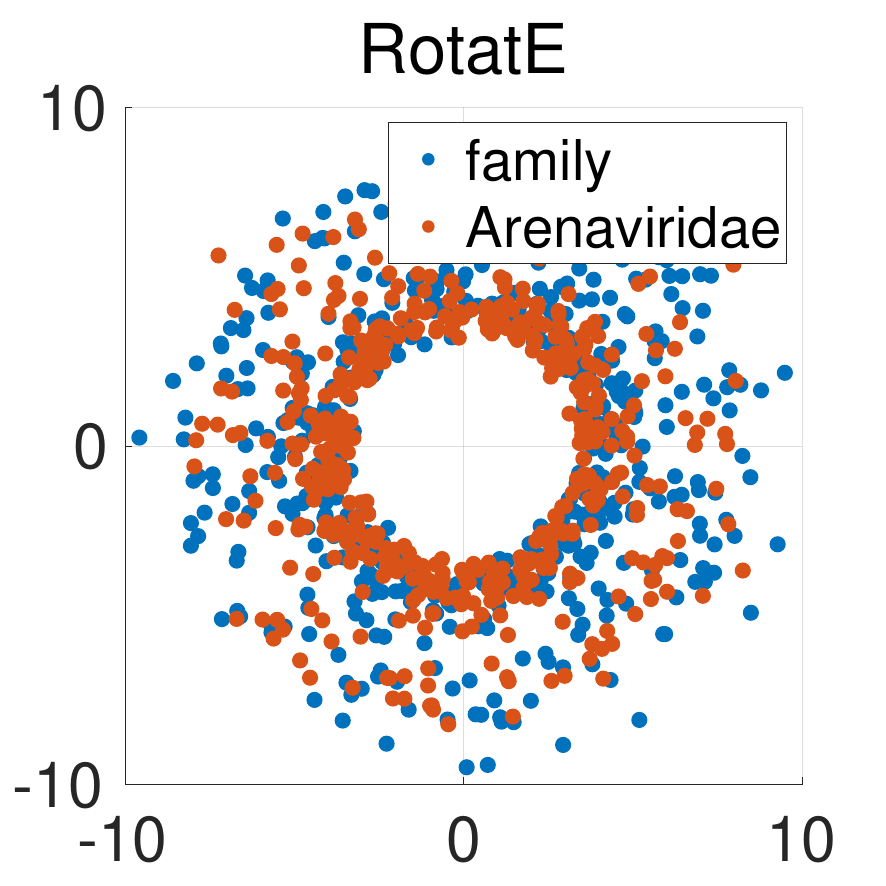}\hspace{4mm}
  \includegraphics[width=93pt]{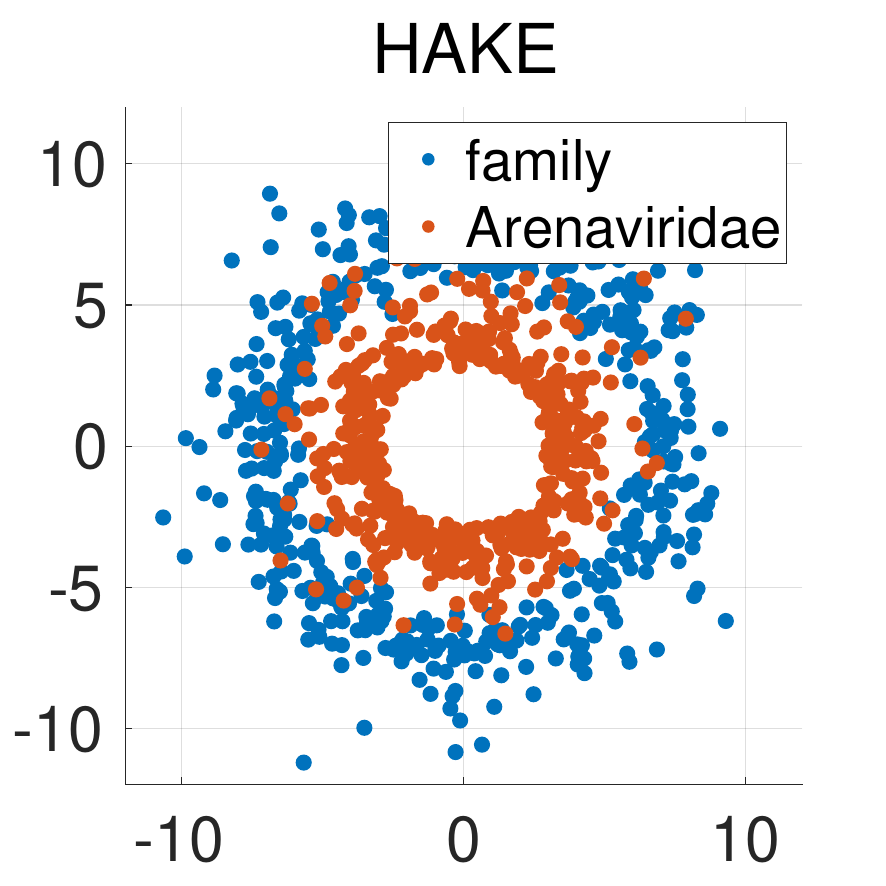}
  \caption{\textit{(family, \_member\_meronym, Arenaviridae)}}
  \label{fig:scatter_modulus_sub3}
\end{subfigure} 
\caption{Visualization of several entity embeddings from WN18RR dataset.}
\label{fig:scatter_modulus2}
\end{figure}

\end{document}